\newcommand{\Igood}{\cI_{\textrm{good}}}
\newcommand{\supp}{\mathrm{supp}}
\theoremstyle{plain}
\newtheorem{theorem}{Theorem}[section]
\newtheorem{proposition}[theorem]{Proposition}
\newtheorem{lemma}[theorem]{Lemma}
\theoremstyle{definition}
\newtheorem{definition}[theorem]{Definition}
\newtheorem{assumption}[theorem]{Assumption}
\theoremstyle{remark}
\newtheorem{remark}[theorem]{Remark}
\newcommand{\epspriv}{\varepsilon_{\mathsf{priv}}}
\newcommand{\delpriv}{\delta_{\mathsf{priv}}}
\title{Robust and differentially private stochastic linear bandits}
\author{%
  Vasileios Charisopoulos\thanks{Part of this work was completed while the author was with Google.}\\
  Operations Research \& Information Engineering\\
  Cornell University\\
  \texttt{vc333@cornell.edu}\\
\and
  Hossein Esfandiari\\
  Google Research\\
  \texttt{esfandiari@google.com}\\
\and
  Vahab Mirrokni\\
  Google Research\\
  \texttt{mirrokni@google.com}
}
\crefname{theorem}{Theorem}{Theorems}
\crefname{claim}{Claim}{Claims}
\crefname{assumption}{Assumption}{Assumptions}
\crefname{lemma}{Lemma}{Lemmas}
\Crefname{lemma}{Lemma}{Lemmas}
\crefname{prop}{Proposition}{Propositions}
\crefname{corollary}{Corollary}{Corollaries}
\begin{document}



\maketitle

\begin{abstract}
  In this paper, we study the stochastic linear bandit problem under the additional
  requirements of \emph{differential privacy}, \emph{robustness} and \emph{batched observations}.
  In particular, we assume an adversary randomly chooses a constant
  fraction of the observed rewards in each batch, replacing them with arbitrary numbers.
  We present differentially private and robust variants of the arm elimination algorithm
  using logarithmic batch queries under two privacy models and provide regret
  bounds in both settings. In the first model,
  every reward in each round is reported by a potentially different client, which
  reduces to standard local differential privacy (LDP). In the second model,
  every action is ``owned'' by a different client, who may aggregate the rewards
  over multiple queries and privatize the aggregate response instead. To the best of
  our knowledge, our algorithms are the first simultaneously providing differential
  privacy and adversarial robustness in the stochastic linear bandits problem.
\end{abstract}

\section{Introduction}
Bandits model is a popular formulation for online learning, wherein a
learner interacts with her environment by choosing a sequence of actions, each
of which presents a \emph{reward} to the learner, from an available (potentially
infinite) set of actions. The goal of the learner is to minimize her \emph{regret},
defined as the difference between the rewards obtained by the chosen
sequence of actions and the best possible action in hindsight. To achieve this,
the learner must balance between \emph{exploration} (choosing actions that reveal
information about the action set) and \emph{exploitation} (repeating actions
that offered the highest rewards in previous rounds).

In theory, deciding the next action sequentially is easiest. However, there are several obstacles to overcome when it comes to practice.
The first obstacle is that the rewards in bandit algorithms are often the result of interactions
with physical entities~\cite{BRA20} (e.g., recommendation systems, clinical trials, advertising, etc.),
raising concerns about the privacy of participating entities.
For example, responses of an individual to medical treatments
can inadvertently reveal privacy-sensitive health information. Therefore, it
is essential to design learning algorithms that preserve the privacy of
reward sequences.

Furthermore, observations collected from multiple users or external resources are prone to
failures or corruptions. These corruptions are modeled by \emph{adversaries}, which can tamper with a fraction
of the observed rewards. Adversarial corruptions can be strategic,
(e.g., simultaneously hijacking the devices of multiple users), or random (such
as misclicks in the context of an ad campaign). Regardless of their nature,
they highlight the need for developing \emph{robust} learning algorithms that succeed in
the presence of such corruptions. Developing robust private policies has drawn considerable
attention in the past couple of years (\cite{NME22,liu2021robust,kothari2022private,ghazi2021robust,dimitrakakis2014robust,li2022robustness}). However, despite the importance of the bandits model, we are not aware of any provably robust and private policy for this model.

Lastly, in practice,
it is often desirable or even necessary for the learner to perform actions
in parallel. For example, ad campaigns present an assortment of advertisements
to multiple users at the same time and are only periodically recalibrated~\cite{BM07}.
Consequently, batch policies must optimally balance between parallelization,
which can offer significant time savings, and information exchange, which must
happen frequently enough to allow for exploration of the action space~\cite{EKMM21}.

In this paper, we develop a learning policy that addresses both privacy and robustness challenges, while enjoying the benefits of parallelization. Specifically, our policy protects the privacy of reward sequences by respecting the standard differential privacy measure, while withstanding an adversary that changes a constant fraction of the observed rewards in each batch. In the
remainder of this section, we formally introduce the problem and survey related
work in the bandit literature.

\subsection{Problem formulation and provable guarantees}
We study the \textbf{stochastic linear bandit} problem: given an action space
$\mathcal{A} \subset \mathbb{R}^d$ with $K$ elements satisfying $\max_{a \in \cA}
\norm{a}_2 \leq 1$, a learner ``plays'' actions $a \in \cA$ and receives rewards
\begin{equation}
    r_{a} := \ip{a, \theta^{\star}} + \eta, \quad
    \eta \sim \mathrm{SubG}(1),
\end{equation}
where $\theta^{\star}$ is an unknown vector in $\Rbb^d$ and $\mathrm{SubG}(1)$
denotes a zero-mean subgaussian random variable. Our assumption that $\abs{\cA} \leq K$
is without loss of generality, since our results extend to the infinite case by
a standard covering argument~\citep[Chapter 20]{LS20}. For simplicity, we also assume
that $\norm{\theta^{\star}} \leq 1$. Given a budget of
$T$ total actions, the goal of the learner is to minimize her \emph{expected regret}:
\begin{equation}
  \expec{R_T} :=
  \max_{a \in \cA} \sum_{t = 1}^T \ip{a - a_t, \theta^{\star}}
  \label{eq:expected-regret}
\end{equation}

\paragraph{Batched observations.}
In bandits problems with batch policies, the learner commits to a sequence (i.e., a
\textit{batch}) of actions and observes the rewards of the actions \textit{only
after the entire batch of actions has been played}. The learner may play multiple
batches of actions, whose sizes may be chosen adaptively, subject to the requirement
that the total number of batches does not exceed $B$ (in addition to the total number
of actions played not exceeding the budget $T$). We assume that $B$ is also known to
the learner.

\paragraph{Robustness.} We require that our algorithm is robust under
possibly adversarial corruptions suffered. In particular, we assume that an
adversary replaces every observation by arbitrary numbers with some small
probability $\alpha$. Thus, during each batch, the \emph{observed rewards} satisfy
\begin{equation}
  r_i = \begin{cases}
    \ip{a_i, \theta^{\star}} + \eta_i, & \text{with probability $1 - \alpha$}, \\
    \ast, & \text{with probability $\alpha$},
  \end{cases}, \quad i = 1, \dots, n,
  \label{eq:noise-model}
\end{equation}
where $\alpha \in [0, 1/4)$ is the failure probability and $\ast$ is an
arbitrary value.

\paragraph{Diffential privacy.}
Our other requirement is that the algorithm is
\emph{differentially private} (DP).
\begin{definition}[Differential Privacy for Bandits~\cite{BDT19}] \label{def:bandit-dp}
  A randomized mechanism $\cM$ for stochastic linear bandits is called
  $(\epspriv, \delpriv)$-differentially private if, for any two neighboring sequences of
  rewards $\cR = (r_1, \dots, r_T)$ and $\cR' = (r'_1, \dots, r'_T)$ where
  $r_i \neq r'_{i}$ for at most one index $i$, and any subset of outputs
  $O \in \cM^{T}$, it satisfies
  \begin{equation}
    \prob{\cM(\cR) \in O} \leq
      e^{\epspriv} \cdot \prob{\cM(\cR') \in O} + \delpriv.
  \end{equation}
\end{definition}
The main contribution of our paper is a batched arm elimination algorithm that satisfies
both desiderata, presented in detail in~\cref{sec:algorithm-and-main-results}. We assume
a \emph{distributed} setting where a central server takes on the role of the learner,
connected with several clients that report back rewards. The clients do not trust the
central server and therefore choose to privatize their reward sequences; this model is
better known as \textit{local differential privacy} (LDP)~\cite{KLN+11}.
Our algorithm addresses the following client response models:


\begin{enumerate}[label={\textbf{(M\arabic*)}},leftmargin=3em]
  \item Each reward $\bar{r}_i$ may be solicited from a different client $i$.
    \label{enum:privacy-local-1}
  \item Each client ``owns'' an action $a \in \cA$ and may report multiple rewards
    in each batch. \label{enum:privacy-local-2}
\end{enumerate}

\begin{remark}
  In Model~\ref{enum:privacy-local-1}, we may assume without loss of generality that
  every reward $\bar{r}_i$ is solicited from a different client, and thus every client
  returns at most $1$ response.
\end{remark}

Below, we provide informal statements for the expected regret that our algorithms
achieve under each model. While the regret under~\ref{enum:privacy-local-1} has
better dependence on the dimension $d$,~\ref{enum:privacy-local-2} leads to a
better dependence on the privacy parameter $\epspriv$. The latter
should not come as a complete surprise, since model~\ref{enum:privacy-local-2}
can be viewed as interpolating between
the \emph{local} and \emph{central} models of differential privacy. For simplicity,
we focus on the case where $B$ scales logarithmically in $T$, although our analysis
can be easily modified for general $B$.

\begin{theorem}[Informal]
  \label{theorem:M1-informal}
  Under Model~\ref{enum:privacy-local-1}, there is an $\epspriv$-locally differentially
  private algorithm that is robust to adversarial corruptions with expected regret satisfying
  \begin{align*}
    \expec{R_T} = \tilde{O}\left(
      \left[ \sqrt{dT} + T \max\set{\sqrt{\alpha d}, \alpha d} \right]
        \left(1 + \frac{1}{\epspriv}\right)
    \right)
\end{align*}
\end{theorem}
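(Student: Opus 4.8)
The plan is to adapt the classical phased (batched) arm-elimination algorithm to the private and robust setting, tracking how privatization and corruption inflate the per-phase confidence widths. Concretely, the algorithm proceeds in $B = O(\log T)$ phases; entering phase $\ell$ with an active set $\cA_\ell \subseteq \cA$, it computes a $G$-optimal (Kiefer--Wolfowitz) design $\pi_\ell$ supported on at most $d(d+1)/2$ arms with $\max_{a \in \cA_\ell} \norm{a}_{V(\pi_\ell)^{-1}}^2 = d$, where $V(\pi_\ell) = \sum_a \pi_\ell(a)\, a a^\top$. It then plays each supported arm $n_\ell(a) = \lceil \pi_\ell(a) N_\ell \rceil$ times as a single batch, with $N_\ell$ chosen so the target width halves each phase. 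From the returned (privatized, possibly corrupted) rewards it forms a robust estimate $\hat\mu_{\ell,a}$ of $\ip{a,\theta^\star}$ for each supported arm, assembles $\hat\theta_\ell = V(\pi_\ell)^{-1} \sum_a \pi_\ell(a) a\, \hat\mu_{\ell,a}$, and eliminates every arm whose estimated gap $\max_{b \in \cA_\ell} \ip{b-a,\hat\theta_\ell}$ exceeds twice the confidence width derived below. The three ingredients to control are therefore (i) the privacy cost of the per-arm estimates, (ii) the robust-estimation error under \eqref{eq:noise-model}, and (iii) the resulting regret once the widths are pinned down.

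For privacy I would use the fact that, under Model~\ref{enum:privacy-local-1} together with the accompanying remark, every reward is reported by a distinct client who answers at most once, so each datum is accessed a single time and no cross-batch composition is incurred. Clipping the reward to a constant range and adding Laplace noise calibrated to sensitivity $O(1)$ yields $\epspriv$-LDP by the standard guarantee and \cref{def:bandit-dp}; the privatized reward is then sub-Gaussian with parameter $\sigma = O(1 + 1/\epspriv)$. This single substitution $\sigma = \Theta(1 + 1/\epspriv)$ is what multiplies every term in the final bound by $(1 + 1/\epspriv)$, and the one-query-per-client structure is what keeps this factor from compounding over the $B$ batches.

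The crux is the robust per-arm estimate and its propagation through the design. Because corruptions are placed independently with probability $\alpha$ in \eqref{eq:noise-model}, an arm played $n_\ell(a)$ times sees a $\mathrm{Binomial}(n_\ell(a),\alpha)$ number of corruptions; I would first argue that, provided $N_\ell$ is large enough that every supported arm receives at least $\Omega(\alpha^{-1}\log(\cdot))$ samples, this count concentrates at $\alpha n_\ell(a)$, so a trimmed-mean (or median-of-means) estimator trimming slightly above the $\alpha$ level is valid. Such an estimator attains $|\hat\mu_{\ell,a} - \ip{a,\theta^\star}| = \tilde O\big(\sigma\sqrt{d/N_\ell}\big)$ from sub-Gaussian fluctuations, plus an irreducible corruption bias $b_{\mathrm{corr}} = \tilde O(\sigma\sqrt\alpha)$. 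Propagating a uniform per-arm error $e$ to any direction $x$ via Cauchy--Schwarz gives $|\ip{x,\hat\theta_\ell - \theta^\star}| \le e\sum_a \pi_\ell(a)|\ip{x,V^{-1}a}| \le e\,\norm{x}_{V^{-1}} \le e\sqrt d$, so the statistical part becomes the usual $\tilde O(\sigma\sqrt{d/N_\ell})$ width while the corruption bias becomes a floor of order $\sigma\sqrt{\alpha d}$; a separate accounting in the regime where the concentration above forces a coarser trimming level contributes the complementary $\sigma\,\alpha d$ term, yielding the confidence width $\tilde O\big(\sigma[\sqrt{d/N_\ell} + \max\{\sqrt{\alpha d},\alpha d\}]\big)$.

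With these widths in hand the regret argument is the standard phased-elimination bookkeeping. Choosing $N_\ell$ geometric so widths halve, the surviving suboptimal arms in phase $\ell$ have gap at most twice the previous width, and summing $n_\ell(a)\cdot(\text{gap})$ across the $O(\log T)$ phases telescopes to $\tilde O(\sigma\sqrt{dT})$ from the statistical part. The corruption floor cannot be driven below $\sigma\max\{\sqrt{\alpha d},\alpha d\}$, so any arm whose true gap lies beneath this floor is never eliminated and may be played up to $T$ times, contributing $\tilde O\big(T\sigma\max\{\sqrt{\alpha d},\alpha d\}\big)$; adding the two and substituting $\sigma = O(1 + 1/\epspriv)$ gives the claimed bound. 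I expect the main obstacle to be step (iii)'s corruption analysis: establishing the $\max\{\sqrt{\alpha d},\alpha d\}$ floor rigorously requires (a) a high-probability guarantee that the random per-arm corruption fractions stay $O(\alpha)$ simultaneously over all supported arms and all phases, which dictates the minimum per-arm sample counts and hence the feasibility of realizing the design within the batch budget $B$, and (b) a tight robust-mean guarantee under privatized sub-Gaussian noise that separates cleanly into the $\sqrt{\alpha d}$ and $\alpha d$ regimes. The remaining pieces --- Kiefer--Wolfowitz existence, the LDP calibration, and the telescoping --- are routine.
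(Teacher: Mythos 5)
Your skeleton---phased elimination with a G-optimal design, Laplace-based LDP contributing the $(1+1/\epspriv)$ factor, geometric phase lengths, and the standard elimination bookkeeping---is the same as the paper's. The genuine divergence is the robustness primitive, and that is where your argument has a gap. The paper never forms per-arm estimates: it reduces robust \emph{fixed-design regression} to a single $d$-dimensional robust mean estimation problem, running spectral filtering (\cref{alg:spectral-filtering}) on the reweighted points $\set{M_n^{-1/2} a_i r_i}_{i=1}^n$ pooled over the \emph{whole batch} (\cref{prop:robust-confidence-intervals}). Hence only the batch-level corruption fraction $\alpha + \log(1/\delta)/n$ enters, with no condition on how often any single arm is played. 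Your per-arm trimmed mean instead requires each supported arm's own corruption fraction to concentrate near $\alpha$, i.e.\ $n_\ell(a) \gtrsim \alpha^{-1}\log(\cdot)$ for \emph{every} $a \in \supp(\pi_\ell)$. You flag this condition but do not discharge it, and it fails for the algorithm as designed: design weights can be arbitrarily small, so $n_\ell(a) = \lceil \pi_\ell(a) N_\ell \rceil$ may be $O(1)$ (indeed, whenever $\abs{\supp(\pi_\ell)} > d$ some weight is below $1/d$); early phases have $N_\ell = q^i = O(1)$ total plays since $q = T^{1/B} \le e$; and when $\alpha$ is small, $\alpha^{-1}\log(\cdot)$ can exceed an entire phase. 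An arm that receives one corrupted sample out of $O(1)$ plays yields an arbitrarily wrong $\hat{\mu}_{\ell,a}$, and your elimination rule can then discard the optimal arm.

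The natural patch---trimming at the effective level $\alpha + \log(\cdot)/n_\ell(a)$, which is what the binomial tail actually supports for lightly-played arms---does not rescue the claimed rate. The resulting per-arm errors contain an adversarial component $b_a$ of order $\sigma\sqrt{\alpha + \log(\cdot)/n_\ell(a)}$ (the adversary chooses the corrupted values), which cannot cancel across arms and must be propagated in $\ell_1$: $\abs{\ip{x, \hat{\theta}_\ell - \theta^{\star}}} \le \sum_a \pi_\ell(a)\abs{\ip{x, V(\pi_\ell)^{-1}a}}\, b_a$. The $\sqrt{\alpha}$ part passes through your Cauchy--Schwarz step and gives the $\sigma\sqrt{\alpha d}$ floor, but the $\sqrt{\log(\cdot)/(\pi_\ell(a)N_\ell)}$ part does not: it is bounded only by $\sqrt{k}\,\norm{x}_{V(\pi_\ell)^{-1}}\,\sigma\sqrt{\log(\cdot)/N_\ell}$ with $k = \abs{\supp(\pi_\ell)}$, and the extra $\sqrt{k} \gtrsim \sqrt{d}$ is real in the worst case. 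Your write-up avoids this only by positing a \emph{uniform} per-arm error of size $\tilde{O}(\sigma\sqrt{d/N_\ell})$, which is false for arms with $\pi_\ell(a) < 1/d$: their error is $\sigma/\sqrt{\pi_\ell(a)N_\ell}$. Only the independent zero-mean noise enjoys the cancellation that yields $\sigma\norm{x}_{V(\pi_\ell)^{-1}}/\sqrt{N_\ell}$; the robust-estimation bias does not. Made rigorous, your route degrades the width to $\sigma\sqrt{kd/N_\ell}$ and the regret to order $\sigma d \sqrt{T}$, losing the claimed $\sqrt{dT}$. It is telling that per-arm aggregation is exactly what the paper is forced into under Model~\ref{item:M2}, where it must introduce the weight floor $\nu$ and obtains the weaker bound of \cref{theorem:M2-regret} (e.g.\ $d^{3/2}\sqrt{\alpha}\,T$ rather than $T\sqrt{\alpha d}$); your approach, carried out correctly, lands there rather than at \cref{theorem:M1-informal}. (A minor separate point: the Laplace-privatized reward is sub-exponential, not sub-Gaussian with parameter $O(1 + 1/\epspriv)$; the paper handles it with Bernstein-type bounds, which is why $\log(T/\delta)/\epspriv$ rather than $\sqrt{\log(T/\delta)}/\epspriv$ multiplies its corruption term---cosmetic under $\tilde{O}$, but worth fixing.)
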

It is worth noting that in the non-private setting, the regret bound above scales as
$\tilde{O}(T\sqrt{\alpha d} + \sqrt{dT})$ when $\alpha < d$ and
$\tilde{O}(T \alpha d + \sqrt{dT})$ when $\alpha \geq d$.
Note that the total amount of corruption injected by the adversary is upper bounded by $C = \alpha T$.
Interestingly,
our result shaves off a factor of at least $\sqrt{d}$ compared to the regret bound of the best previous work on robust stochastic linear bandits~\cite{BLKS21},
which scales as $\tilde{O}(\sqrt{dT} + Cd^{3/2})$.

\begin{theorem}[Informal]
  \label{theorem:M2-informal}
  Under Model~\ref{enum:privacy-local-2}, there is an $\epspriv$-differentially private
  algorithm that is robust to adversarial corruptions with expected regret satisfying
  \[
    \expec{R_T} =
    \tilde{O}\left(d\sqrt{T} + \frac{d}{\epspriv}\right) + \tilde{O}\left(d^{3/2} \sqrt{\alpha} \left(
        T + d \sqrt{T} + \frac{d}{\epspriv}
      \right)\right) + \alpha T.
  \]
\end{theorem}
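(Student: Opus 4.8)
The plan is to analyze a batched phased (arm) elimination scheme built on a $G$-optimal experimental design, and to show that its per-phase estimation error cleanly decomposes into three contributions — stochastic noise, adversarial corruption, and privacy noise — each controlled separately before recombining them in the regret sum. Concretely, I would run $B=\Theta(\log T)$ phases; in phase $\ell$ with active set $\mathcal{A}_\ell$ I compute a design $\pi_\ell$ supported on $O(d^2)$ arms with $\max_a \|a\|_{V(\pi_\ell)^{-1}}^2 \le d$, pull each support arm $a$ a number $T_{a,\ell}\approx \pi_\ell(a)\, n_\ell$ of times, and form the weighted estimate $\hat\theta_\ell = V_\ell^{-1}\sum_a T_{a,\ell}\, a\, \tilde r_{a,\ell}$, where $\tilde r_{a,\ell}$ is a robust-then-private estimate of $\langle a,\theta^\star\rangle$ produced by the owner of arm $a$ under Model~\ref{enum:privacy-local-2}. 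Arms are eliminated once $\max_b \langle b-a,\hat\theta_\ell\rangle$ exceeds twice the current target width $\epsilon_\ell$.

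The core technical step is a confidence lemma bounding $|\langle c,\hat\theta_\ell-\theta^\star\rangle|$ for every candidate direction $c\in\mathcal{A}_\ell$. Writing $\xi_a = \tilde r_{a,\ell}-\langle a,\theta^\star\rangle$ and expanding gives $\langle c,\hat\theta_\ell-\theta^\star\rangle = \sum_a T_{a,\ell}\,\xi_a\,\langle c,V_\ell^{-1}a\rangle$, so it suffices to bound each component of $\xi_a$. For the stochastic part I use subgaussian concentration together with the identity $\sum_a T_{a,\ell}\langle c,V_\ell^{-1}a\rangle^2 = \|c\|_{V_\ell^{-1}}^2 \le d/n_\ell$, yielding the familiar $\tilde O(\sqrt{d/n_\ell})$ width. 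For the corruption part, the owner of arm $a$ sees $T_{a,\ell}$ rewards, a $\mathrm{Binomial}(T_{a,\ell},\alpha)$ fraction of which are replaced by arbitrary values, so I would estimate $\langle a,\theta^\star\rangle$ via a median-of-means (or trimmed-mean) statistic whose block count exceeds the number of corrupted blocks; this tolerates the corruption at the price of an irreducible per-arm bias of order $\sqrt{\alpha}$, which after propagation through the design geometry and clipping yields the $\mathrm{poly}(d)\cdot\sqrt{\alpha}$ floor in the theorem. For the privacy part, because each client aggregates over its own pulls and privatizes a bounded-sensitivity \emph{average} rather than individual rewards, the noise it injects scales as $1/(T_{a,\ell}\epspriv)$, and summing over phases via DP composition — paying only once per client for the whole run — produces the improved $d/\epspriv$ dependence that distinguishes Model~\ref{enum:privacy-local-2} from Model~\ref{enum:privacy-local-1}.

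With the confidence lemma in hand, the regret accounting is standard. On the good event (a union bound over phases and the $O(d^2)$ support arms, with failure probability polynomially small so it contributes at most $\tilde O(1)$ to the expected regret), the optimal arm is never eliminated and every surviving arm in phase $\ell$ has suboptimality gap $O(\epsilon_{\ell-1})$. Choosing $\epsilon_\ell = \epsilon_0 2^{-\ell}$ forces $n_\ell = \tilde\Theta(d/\epsilon_\ell^2)$ pulls per phase as long as the target width dominates the corruption floor, and the resulting geometric sum $\sum_\ell \epsilon_\ell n_\ell$ gives the clean regret $\tilde O(d\sqrt T + d/\epspriv)$. Once the target width drops below the floor $\rho = \tilde\Theta(d^{3/2}\sqrt{\alpha})$, elimination can no longer separate arms: the additional pulls the floor forces in each phase inflate the clean budgets multiplicatively and the remaining (at most $T$) pulls incur gap $O(\rho)$ each, together producing the robustness term $\tilde O(d^{3/2}\sqrt{\alpha}(T+d\sqrt T + d/\epspriv))$; finally the $\alpha T$ corrupted rounds are charged $O(1)$ regret apiece, giving the additive $\alpha T$.

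I expect the main obstacle to be the \emph{simultaneous} control of robustness and privacy inside a single estimator $\tilde r_{a,\ell}$. Robust aggregation and the DP mechanism interact through the sensitivity of the robust statistic: I must verify that the estimator has bounded, corruption-independent sensitivity so the privacy noise does not blow up, while ensuring that the clipping introduced for privacy does not destroy the $\sqrt{\alpha}$ corruption guarantee. A secondary subtlety is privacy composition across the $B=O(\log T)$ batches under Model~\ref{enum:privacy-local-2}: since a client may respond in every batch, I must account for its cumulative privacy loss, and the argument that this costs only $d/\epspriv$ (rather than $d\sqrt{B}/\epspriv$ or worse) is what drives the favorable dependence and should be isolated as a separate composition lemma.
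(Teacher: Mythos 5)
There is a genuine gap, and it is the central one: your robustness mechanism lives in the wrong place. Under the paper's reading of Model~\ref{item:M2}, the adversary corrupts the \emph{observations the learner receives}, i.e.\ each of the $k=|\mathrm{supp}(\pi)|$ aggregated, privatized client reports is replaced by an arbitrary value with probability $\alpha$ (this is exactly how \cref{prop:robust-confidence-intervals} is instantiated with $n=k$ in the proof of \cref{theorem:M2-regret}). Against that adversary, a median-of-means or trimmed mean computed \emph{inside} each client over its own $T_{a,\ell}$ pulls provides no protection at all: the corruption hits the client's reported aggregate itself, and your server-side step --- plain weighted least squares over the reports --- then has unbounded error from a single corrupted client. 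The paper does the opposite: the client-side mechanism is a trivial average plus Laplace noise of scale $2/(n_a\epspriv)$ (\cref{lemma:laplace-noise-ldp-averaged}), and \emph{all} robustness is centralized at the server, by reweighting the $k$ reports as $M^{-1/2} v\, y_v$ and running the spectral filtering algorithm (\cref{alg:spectral-filtering}), i.e.\ reducing robust fixed-design regression to robust mean estimation. This centralized filtering is precisely where the $d^{3/2}\sqrt{\alpha}$ term comes from: the filter's error scales as $\mu\sqrt{k\alpha}$ with $\mu=\max_a \|a\|^2_{M^{-1}}\le 2d$ (\cref{lemma:norm-bound-g-optimal-M2}) and $k=O(d\log\log d)$.

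Two further symptoms confirm the mismatch. First, your corruption floor $\rho=\tilde\Theta(d^{3/2}\sqrt{\alpha})$ is asserted rather than derived: propagating your own per-arm bias of order $\sqrt{\alpha}$ through the design via Cauchy--Schwarz gives $\sum_a T_{a}\,|\xi_a|\,|\langle c, V^{-1}a\rangle| \lesssim \sqrt{\alpha}\cdot\sqrt{n_\ell}\cdot\|c\|_{V^{-1}} = O(\sqrt{\alpha d})$, which is \emph{smaller} than the theorem's floor --- a sign you are analyzing a strictly weaker adversary, not proving this statement. (Relatedly, under~\ref{item:M2} the support size $k$ enters the regret, so your $O(d^2)$-supported design would inflate the bound to $d\sqrt{k\alpha}=d^2\sqrt{\alpha}$; the paper needs the $O(d\log\log d)$ approximate design here, unlike under~\ref{item:M1}.) Second, the privacy-composition lemma you flag as a key obstacle is moot under the paper's privacy notion: \cref{def:bandit-dp} is reward-level DP, each reward enters exactly one client report in one batch, so a single Laplace mechanism per report suffices and no composition across the $B$ batches is incurred. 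What you are instead missing on the privacy side is the truncation parameter $\nu$: the paper pulls every support arm $\lceil m\max\{\pi(a),\nu\}\rceil$ times so that $n_a\ge\nu m$ uniformly, which caps the Laplace noise of every report; with your $T_{a,\ell}\approx\pi(a)n_\ell$, arms with tiny design weight would contribute reports with arbitrarily large privacy noise, and the $(1+\nu d\log\log d)$ factor in \cref{theorem:M2-regret} that balances this trade-off would be absent from your accounting.
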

Compared to~\cref{theorem:M1-informal},
\Cref{theorem:M2-informal} yields an improved dependence on $T$ in the ``private''
part of the regret at the expense of an additional $\sqrt{d}$ factor in the
non-private part.


\subsection{Related work}
In this section, we survey related work in the bandit literature that
addresses differential privacy and/or robustness to corruptions.
We note that, to the best of our knowledge,
our work is the first to simultaneously provide robustness and differential privacy
guarantees.

\paragraph{Differential privacy in linear bandits.}
Differential privacy has been well-studed in the context of bandit learning.
In the central DP model, which is the focus of this paper, \cite{SS18} proved a
lower bound of $\Omega(\sqrt{T} + \frac{\log(T)}{\epspriv})$ on the expected regret
and proposed a private variant of the \texttt{LinUCB} algorithm with additive noise
that achieves expected regret of $\tilde{O}(\sqrt{T} + \sqrt{T} /\epspriv)$.
In recent work,~\cite{HGFD22} proposed a private variant of the arm elimination
algorithm that obtains a regret bound of $O(\sqrt{T \log T} + \frac{\log^2(T)}{\epspriv})$
which is tight up to logarithmic factors. While conceptually similar to that of~\cite{HGFD22},
our algorithm guarantees differential privacy and robustness to corrupted
observations simultaneously and maintains an order-optimal regret bound.

In the local DP model,~\cite{ZCH+20} used a reduction to private bandit convex
optimization to achieve expected regret $\tilde{O}(T^{3/4} / \epspriv)$.
Under additional distributional assumptions on the action set, this was
improved to $\tilde{O}(T^{1/2} / \epspriv)$ by~\cite{HLWZ21}. The same rate was obtained by~\cite{HGFD22},
who removed the requirement that actions are generated from a distribution.
Finally, a recent line of work focused on so-called
\emph{shuffle differential privacy}~\cite{BEM+17,Cheu21}, wherein
a trusted \emph{shuffler} can preprocess client responses before
transmitting them to the central server. A sequence of works~\cite{TKMS21,CZ22a,GCPP22,HGFD22}
proposed shuffle-DP algorithms for linear bandits,
with~\cite{HGFD22} achieving essentially
the same regret bound as in the central DP setting.

\paragraph{Robustness to adversarial attacks.}
Recent work proposed various adversarial attacks in the bandit setting, as well
as algorithms to protect against them. \cite{LMP18} (and~\cite{GKT19} in a follow-up work)
study multi-armed bandits with
\emph{adversarial scaling}, wherein an adversary can shrink the means of the arm
distributions in each round, and propose robust algorithms for this setting. The
corruption in this work differs from our setting, where the adversary can replace
a random fraction of rewards arbitrarily.
The works of~\cite{JLMZ18,LS19,GRM+20} study multi-armed and contextual
bandit algorithms from the attacker perspective, demonstrating how an adversary
can induce linear regret with logarithmic effort.

\cite{LLS19} and~\cite{BLKS21} study additive adversarial
corruptions in contextual bandits. In particular, they assume that the observed
reward in round $i$ suffers an additive perturbation by $c_{i}(a_i)$, where $a_i$
is the $i^{\text{th}}$ context and $c_{i}: \cA \to [-1, 1]$ is a context-dependent
corruption function. Crucially, the adversary is subject to a budget constraint
given some budget $C$ unknown to the learner:

\begin{equation}
  \sum_{i=1}^T \max_{a \in \cA}\abs{c_i(a)} \leq C.
  \label{eq:attacker-budget}
\end{equation}

In~\cite{LLS19}, the authors present a robust exploration algorithm for contextual
bandits using the L{\"o}ewner ellipsoid. Letting $\Delta$ denote the gap between
the highest and
lowest expected rewards, their algorithm achieves a regret of
${O}\big(\frac{d^{5/2} C \log T}{\Delta} + \frac{d^6 \log^2 T}{\Delta^2}\big)$,
under the key assumption that the action space $\cA$ is a full-dimensional polytope,
and requires no knowledge of the corruption budget $C$.

On the other hand, the work of~\cite{BLKS21} introduces a robust variant of the
phased arm elimination algorithm for stochastic linear bandits that achieves an
expected regret of $\tilde{O}(\sqrt{dT} + Cd^{3/2})$, assuming the budget $C$ is
known to the learner; for unknown budgets, an additional $C^2$ factor appears in
the regret bound. Our work deviates from that of~\cite{BLKS21} in the sense that
we measure corruption using the probability $\alpha$ of an adversary interfering with each
observation; moreover, assuming that $C$ scales as $\alpha T$, our work shaves off
a $\sqrt{d}$ factor from the result of~\cite{BLKS21} in certain regimes, while it
also ensures differential privacy.

\subsection{Notation}
We let $\ip{x, y} := x^{\T} y$ denote the Euclidean inner product with induced
norm $\norm{x} = \sqrt{\ip{x, x}}$ and write $\Sbb^{d-1} := \{x \in \Rbb^d
\mid \norm{x}_2 = 1\}$ for the unit sphere in $d$ dimensions. When $M$ is a
positive-definite matrix, we write $\norm{x}_{M} := \sqrt{\ip{x, Mx}}$ for
the norm induced by $M$. Finally, we write $\opnorm{A} := \sup_{x \in \Sbb^{n-1}}
\norm{Ax}_2$ for the $\ell_2 \to \ell_2$ operator norm of a matrix $A \in \Rbb^{m \times n}$.

\subsection{Coresets and G-optimal designs}
\label{sec:coresets}
Our algorithms make use of \textit{coresets}, which in turn are formed with
the help of a concept called \textit{G-optimal design}. We formally define
this concept below.
\begin{definition}[G-optimal design]
  \label{def:g-optimal-design}
  Let $\cA \subset \Rbb^d$ be a finite set of vectors and let $\pi: \cA \to [0, 1]$
  be a probability distribution on $\cA$ satisfying $\sum_{a \in \cA} \pi(a) = 1$.
  Then $\pi$ is called a \textit{G-optimal design} for $\cA$ if it satisfies
  \begin{equation}
    \pi = \argmin_{\pi'} \set{ \max_{a \in \cA} \norm{a}^2_{M^{-1}(\pi)} },
    \label{eq:g-opt-design-prob}
  \end{equation}
  where $M(\pi) := \sum_{a \in \cA} \pi(a) aa^{\T}$.
\end{definition}

A standard result in experiment design~\citep[Theorem 21.1]{LS20} shows that
the optimal value of~\eqref{eq:g-opt-design-prob} is equal to $d$. Moreover,
it is possible to find a probability distribution $\pi$ satisfying the following:
\begin{definition}[Approximate G-optimal design]
  \label{def:approx-g-opt-design}
  Let $\cA \subset \Rbb^d$ be a finite set of vectors and let $\pi: \cA \to [0, 1]$
  be a probability distribution on $\cA$.
  We call $\pi$ an \textit{approximate G-optimal design} for $\cA$ if it satisfies
  \begin{equation}
    \max_{a \in \cA} \norm{a}^2_{M^{-1}(\pi)} \leq 2d, \quad
    \abs{\supp(\pi)} \leq C d \log \log d
    \label{eq:approx-g-opt-design-prob}
  \end{equation}
  where $M(\pi) := \sum_{a \in \cA} \pi(a) aa^{\T}$ and $C$ is
  a universal constant. Moreover, $\pi$ can be found in time
  $O(d \log \log d)$.
\end{definition}

Given a (approximate) G-optimal design in the sense of~\cref{def:g-optimal-design}
or~\cref{def:approx-g-opt-design}, a \textit{coreset} $\cS_{\cA}$ of total size $n$
is a multiset $\set{a_1, \dots, a_n}$ where each action $a \in \supp(\pi)$ appears
a total of $n_{a} := \ceil{\pi(a) \cdot n}$ times.

\section{Algorithm and main results} \label{sec:algorithm-and-main-results}
To minimize the regret of the learner, we use a variation of the standard arm
elimination algorithm~\cite{LS20}. In this algorithm, the learner
uses batches of actions to construct confidence intervals for the optimal rewards
and eliminates a set of suboptimal arms in each round based on their performance
on the current batch.
While the vanilla arm elimination algorithm \emph{is neither robust nor
differentially private}, we develop a variant that simultaneously ensures
both these properties. An additional attractive property of our algorithm is
that its implementation only requires a simple modification.

\subsection{Our approach}
To motivate our approach, we first sketch a naive attempt at modifying the
arm elimination algorithm and briefly explain why it is unable to achieve good
regret guarantees.

Recall that in the standard arm elimination algorithm, the learner
first forms a so-called \emph{coreset} of the
action space $\cA$, which is a multiset of vectors $a_1, \dots, a_n \in \cA$, and plays all the
actions $a_j$ receiving rewards $r_{j}$.
To prune the action space, the learner first computes the least-squares
estimate:
\begin{equation}
  \widehat{\theta} := \left( \sum_{j = 1}^n a_j a_j^{\T} \right)^{-1}
  \sum_{j = 1}^n a_j r_j,
  \label{eq:vanilla-least-squares-estimate}
\end{equation}
and chooses a suitable threshold $\gamma$ to eliminate arms with
\[
  \bigip{a, \widehat{\theta}} < \max_{j = 1, \dots, n} \bigip{a_j, \widehat{\theta}} -
  2 \gamma.
\]

Clearly, the arm elimination algorithm interacts with the rewards directly only
when forming the least squares estimate $\widehat{\theta}$. Therefore, estimating
$\widehat{\theta}$ with a differentially private algorithm is sufficient to protect
the privacy of rewards. Likewise, computing $\widehat{\theta}$ robustly will ensure
robustness of the overall algorithm.

The main idea behind our arm elimination variant is the following. First, let
us dispense with the differential privacy requirement. Notice that in the absence
of corruptions, $\widehat{\theta}$ is the empirical mean of the sequence of
variables $\set{Z_1, \dots, Z_n}$:
\begin{equation*}
    Z_j := \left( \sum_{i = 1}^n a_i a_i^{\T} \right)^{-1} r_j a_j.
\end{equation*}
To compute $\widehat{\theta}$ robustly, one may attempt to run an algorithm
such as the geometric median. However, the approximation guarantee of the
geometric median method scales proportionally to $\max_{j \in [n]}
\| Z_j - \widehat{\theta} \|$, for which worst-case bounds are overly
pessimistic. Indeed, letting $M$ denote the Gram matrix of the coreset used
in the current arm elimination round, a tedious but straightforward calculation
shows that these bounds scale as $\kappa_2(M)$, the condition number of $M$.
In turn, the latter quantity depends on the geometry of the maintained action
set and is difficult to control in general. For example, even if the original
action set is ``well-conditioned'', that property will not necessarily hold
throughout the algorithm.

To work around this issue, we take advantage of the probabilistic nature of
the adversary. The main idea is that, \textit{in expectation}, the least-squares
estimate computed over the subset of ``clean'' rewards $\Igood$, given by
\begin{equation}
  \widehat{\theta}_{\Igood} = \left(\sum_{i=1}^n a_i a_i^{\T}\right)^{-1}
  \sum_{j \in \Igood} a_j r_j,
  \label{eq:lsq-est-good}
\end{equation}
is close to the true least-squares estimate in the absence of any corruptions.
While the set $\Igood$ is not known a-priori to the learner, we may still
estimate $\widehat{\theta}_{\Igood}$ from~\cref{eq:lsq-est-good} using a
well-known spectral filtering algorithm from the robust statistics literature.
In doing so, we reduce the problem of robust linear regression (with a
\textit{fixed} design matrix) to that of robust mean estimation (over an
appropriately weighted set of inputs). We mention in passing that the recent work
of~\cite{CMY22} also develop a distribution-free algorithm for robust linear
regression which applies to a more general class of problems. However, their
algorithm requires repeatedly solving a semidefinite program, while our
spectral filtering-based method is simpler to implement.

In what follows, we describe our robust linear regression primitive and
state its theoretical approximation guarantees, and finally sketch how to take
advantage of it to design a robust and diffentially private algorithm for batched bandits.

\begin{algorithm}[!ht]
    \caption{Robust arm elimination}
    \begin{algorithmic}[1]
        \STATE \textbf{Input}: action space $\cA$, $T$, $B$,
        failure prob. $\delta$, corruption prob. $\alpha \in (0, 1/4)$, truncation parameter $\nu > 0$.
        \STATE Set $\cA_0 := \cA$, $q = T^{1/B}$
        \FOR{$i = 1, \dots, B - 1$}
            \STATE Compute approximate $G$-optimal design $\pi$ with
            $\abs{\supp(\pi)} \lesssim d \log \log d$.
            \STATE Form a coreset $\cS_{\cA_{i-1}}$ by playing each distinct
            $a \in \supp(\pi)$ a total of
            \[
              n_{a} = \begin{cases}
                \ceil{q^i \pi(a)}, & \text{under Model~\ref{item:M1}}; \\
                \ceil{q^i \max\set{\pi(a), \nu}}, & \text{under Model~\ref{item:M2}}.
              \end{cases}.
            \]
            \STATE Play actions $a_j \in \cS_{\cA_{i-1}}$ and collect rewards $r_j$
            according to~\eqref{eq:noise-model}.
            \STATE Letting $M_n := \sum_{i = 1}^n a_i a_i^{\T}$, compute
            \[
              \widetilde{w}_i := \texttt{Filter}\left(\set{M_n^{-1/2} a_i r_i}_{i=1}^n, \frac{\max_{a \in \cA} \norm{a}_{M_n^{-1}}^2 \sum_{i = 1}^n r_i^2}{n}\right),
              \quad \text{using Alg.~\ref{alg:spectral-filtering}}.
            \]
            \STATE Compute $\widetilde{\theta}_i := n M_n^{-1/2} \widetilde{w}$.
            \STATE Set the elimination threshold
            \[
              \resizebox{.9\hsize}{!}{$
              \gamma_i := \begin{cases}
                  \sqrt{d}\left(\sqrt{\log(q^i / \delta)} + \frac{\log(q^i / \delta)}{\epspriv}\right)
                  (\sqrt{\alpha} + \alpha \sqrt{d}) +
                  \alpha + \sqrt{\frac{d \log(1 / \delta)}{q^i}}\left(
                    1 + \frac{\sqrt{\log(1 / \delta)}}{\epspriv}
                  \right), & \text{\ref{item:M1}}; \\[2ex]
                    \sqrt{\frac{d \log(1 / \delta)}{\nu m}} \left(1 + \frac{1}{\epspriv}\sqrt{\frac{\log(1/\delta)}{\nu m}}\right) + 2d \left(1 + \sqrt{\frac{\log (k / \delta)}{\nu m}} +
                                 \frac{\log(k / \delta)}{\nu m \epspriv} \right)
                            \left(\sqrt{k \alpha} + \sqrt{\alpha \log(1 / \delta)}\right) + \alpha,
              & \text{\ref{item:M2}},
              \end{cases}
              $}
            \]
            where $k := \abs{\mathrm{supp}(\pi)}$ in the second option.
            \STATE Eliminate suboptimal arms:
            \[
                \cA_{i} := \set{
                    a \in \cS_{\cA_{i-1}} \mid
                    \langle a, \widetilde{\theta}_i\rangle \geq
                    \max_{a' \in \cS_{\cA_{i-1}}} \langle a', \widetilde{\theta}_i \rangle
                    - 2 \gamma_i.
                },
            \]
        \ENDFOR
        \STATE Play the ``best'' action in $\cS_{\cA_{B-1}}$ in the last round.
    \end{algorithmic}
    \label{alg:arm-elimination-robust}
\end{algorithm}

\subsection{Robust linear regression with fixed designs} \label{sec:robust-linear-regression}
In this section, we describe an efficient algorithm for Huber-robust linear
regression with a fixed design matrix. In particular, we let the
(clean) set of observations satisfy
\begin{equation}
  y_i = \ip{a_i, \theta^{\star}} + \eta_i, \quad i = 1, \dots, n.
  \label{eq:linear-regression-setup}
\end{equation}
where $\eta_i$ are independent noise realizations and $a_i$ are
design vectors. The least-squares estimate of $\theta^{\star}$ is given by
\[
  \widehat{\theta} := M_n^{-1} \sum_{i=1}^n y_i a_i, \; \quad
  M_n := \sum_{i = 1}^n a_i a_i^{\T}.
\]
Now, suppose that an adversary corrupts each $y_i$ independently with probability
$\alpha \in (0, 1/4)$, so the learner observes
\begin{equation}
  \hat{y}_i = \begin{cases}
    y_i, & \text{if $Z_i = 1$}, \\
    \ast, & \text{otherwise}
  \end{cases}, \quad
  Z_i \sim \mathrm{Ber}(1 - \alpha).
  \label{eq:robust-linreg-observed-rewards}
\end{equation}
The goal is to estimate the least-squares solution $\widehat{\theta}$
robustly. Our strategy will be to first estimate the least-squares
solution over the subset of ``good'' indices $G_0$:
\begin{equation}
  \theta_{G_0} = \sum_{i \in G_0} M_n^{-1} a_i y_i, \quad
  G_0 = \set{i \mid Z_i = 1}.
  \label{eq:robust-linreg-good-mean}
\end{equation}
To estimate $\theta_{G_0}$, we will apply the well-known (randomized)
spectral filtering algorithm for robust mean estimation (see, e.g.,~\cite{DK19,PBR19}),
provided in~\cref{alg:spectral-filtering} for completeness,
to the components of the least-squares solution after an appropriate
reweighting. In particular, we will estimate
\begin{equation}
  \begin{aligned}[t]
    \gamma_{\set{a_i}_{i=1}^n}
                  & := \frac{\max_{a \in \cA} \norm{a}_{M_n^{-1}}^2 \sum_{i=1}^n y_i^2}{n}; \\
    \widetilde{w} &:= \texttt{Filter}\left(
    \set{M_n^{-1/2} a_i y_i}_{i=1}^n, \gamma_{\set{a_i}_{i=1}^n}\right); \\
    \widetilde{\theta} &:= n M_n^{-1/2} \widetilde{w}
  \end{aligned}
  \label{eq:robust-linreg-estimate}
\end{equation}
We prove the following guarantee for this method, whose is deferred to
\cref{sec:appendix:robust-linear-regression-proofs}.
\begin{proposition}
  \label{prop:robust-confidence-intervals}
  Fix a $\delta \in (0, 1)$, $a \in \cA$ and let $e_i = y_i - \ip{a_i, \theta^{\star}}$.
  Then with probability at least $1 - 2\delta$, we have
  \begin{equation}
    \big|\langle a, \widetilde{\theta} - \theta^{\star} \rangle\big| \lesssim
    \begin{aligned}[t]
      & \max_{a \in \cA} \norm{a}_{M_n^{-1}}^2
      \sqrt{n \sum_{i = 1}^n y_i^2}
      \left(\alpha + \frac{\log(1 / \delta)}{n}\right)^{1/2} \\
      & \quad + \max_{a \in \cA} \norm{a}_{M_n^{-1}}^2
      \sqrt{\sum_{i = 1}^n y_i^2}
      + \sqrt{\alpha \log(1 / \delta)} \\
      & \quad +
      \sum_{i = 1}^n e_i \ip{a, M_n^{-1} a_i} + \alpha.
    \end{aligned}
  \label{eq:robust-confidence-intervals}
  \end{equation}
\end{proposition}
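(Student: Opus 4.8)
The plan is to split $\langle a, \widetilde\theta - \theta^\star\rangle$ into a \emph{robustness} error, incurred by running \texttt{Filter} on a set that is an $\alpha$-corrupted version of the ``good'' points, and a \emph{statistical} error, incurred by dropping the corrupted indices. Inserting the good-index least-squares vector $\theta_{G_0}$ from~\eqref{eq:robust-linreg-good-mean},
\[
  \langle a, \widetilde\theta - \theta^\star\rangle
  = \underbrace{\langle a, \widetilde\theta - \theta_{G_0}\rangle}_{\mathrm{(I)}}
  + \underbrace{\langle a, \theta_{G_0} - \theta^\star\rangle}_{\mathrm{(II)}}.
\]
I would analyze (II) first, comparing against the clean full least-squares estimate $\widehat\theta := M_n^{-1}\sum_{i=1}^n a_i y_i$. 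Because $\sum_{i=1}^n a_i a_i^{\T} = M_n$, we get the exact identity $\langle a, \widehat\theta - \theta^\star\rangle = \sum_{i=1}^n e_i \langle a, M_n^{-1} a_i\rangle$, which is the penultimate term of~\eqref{eq:robust-confidence-intervals}, while $\theta_{G_0} - \widehat\theta = -M_n^{-1}\sum_{i \notin G_0} a_i y_i$.

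To control the residual $\sum_{i\notin G_0} y_i \langle a, M_n^{-1} a_i\rangle$ I would apply Cauchy--Schwarz. Here the useful facts are the identity $\sum_{i=1}^n \langle a, M_n^{-1} a_i\rangle^2 = \norm{a}_{M_n^{-1}}^2$ and the pointwise estimate $\langle a, M_n^{-1}a_i\rangle^2 \le \norm{a}_{M_n^{-1}}^2 \norm{a_i}_{M_n^{-1}}^2 \le \max_{a'\in\cA}\norm{a'}_{M_n^{-1}}^4$, so that $\sum_{i\notin G_0}\langle a, M_n^{-1}a_i\rangle^2 \le \abs{G_0^c}\max_{a'}\norm{a'}_{M_n^{-1}}^4$. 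Since the indices are corrupted independently, a Bernstein/Chernoff bound gives $\abs{G_0^c}\lesssim \alpha n + \log(1/\delta)$ with probability $1-\delta$; together with $\sum_{i\notin G_0} y_i^2 \le \sum_{i=1}^n y_i^2$ this produces the first term of~\eqref{eq:robust-confidence-intervals}.

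For the filter error (I), the key structural observation is that the reweighted points $x_i := M_n^{-1/2} a_i y_i$ fed to \texttt{Filter} have uncentered second moment bounded in operator norm by exactly the threshold $\gamma_{\set{a_i}_{i=1}^n}$ of~\eqref{eq:robust-linreg-estimate}: for any unit $u$, $\tfrac1n\sum_i \langle u, x_i\rangle^2 = \tfrac1n\sum_i y_i^2 \langle u, M_n^{-1/2}a_i\rangle^2 \le \tfrac1n \max_{a'}\norm{a'}_{M_n^{-1}}^2 \sum_i y_i^2$, using $\langle u, M_n^{-1/2}a_i\rangle^2 \le \norm{a_i}_{M_n^{-1}}^2$. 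I would then invoke the standard spectral-filtering guarantee for robust mean estimation (Algorithm~\ref{alg:spectral-filtering}, cf.~\cite{DK19,PBR19}): since an $\alpha$-fraction of the $x_i$ are corrupted and their centered covariance is dominated by $\gamma_{\set{a_i}}$, the output $\widetilde w$ satisfies $\abs{\langle v, \widetilde w - \mu_{G_0}\rangle} \lesssim \norm{v}(\sqrt{\gamma_{\set{a_i}}\,\alpha} + \text{concentration})$ for the good-point mean $\mu_{G_0}$. Taking $v = M_n^{-1/2} a$ (so $\norm{v} = \norm{a}_{M_n^{-1}}$), multiplying by the scaling $n$ from $\widetilde\theta = nM_n^{-1/2}\widetilde w$, and substituting $\sqrt{\gamma_{\set{a_i}}} = \max_{a'}\norm{a'}_{M_n^{-1}}\sqrt{\sum_i y_i^2}/\sqrt n$, the leading term again reduces to the $\alpha$-part of the first term of~\eqref{eq:robust-confidence-intervals}, while the sampling-concentration piece of order $n\norm{v}\sqrt{\gamma_{\set{a_i}}/n}$ yields the second term $\max_{a'}\norm{a'}_{M_n^{-1}}^2\sqrt{\sum_i y_i^2}$ and the lower-order tail contributes $\sqrt{\alpha\log(1/\delta)}$. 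Finally, the discrepancy between estimating the good-point \emph{mean} (normalized by $\abs{G_0}$) and the quantity $\theta_{G_0}$ (normalized by $n$) introduces a factor $\abs{G_0}/n = 1-\alpha+o(1)$, whose bias I would bound by $\lesssim\alpha$ using $\norm{\theta^\star}\le1$, accounting for the additive $\alpha$ term.

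The main obstacle I anticipate is the careful bookkeeping in step (I): translating the filter's guarantee --- stated for isotropic robust mean estimation in terms of a centered covariance bound --- through the anisotropic reweighting $M_n^{-1/2}$, the sum-vs-mean normalization (the source of the additive $\alpha$), and the projection onto the fixed direction $a$, all while the corruption pattern $G_0$ and the filter's internal randomness must be controlled on the same high-probability event. Verifying that the centered covariance is genuinely dominated by the \emph{uncentered} threshold $\gamma_{\set{a_i}}$ (rather than merely comparable) and that the concentration terms collapse to the stated factors is where most of the technical effort will go.
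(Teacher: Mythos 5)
Your proposal is correct in substance and shares the paper's skeleton: both arguments anchor the error at the good-index least-squares vector and at the clean full least-squares estimate, use the exact identity $\ip{a, \widehat{\theta} - \theta^{\star}} = \sum_{i=1}^n e_i \ip{a, M_n^{-1} a_i}$, and invoke the spectral-filtering guarantee (\cref{theorem:robust-mean-estimation}) after checking that the data-driven threshold $\gamma_{\set{a_i}_{i=1}^n}$ dominates the good points' covariance. The genuine difference is in the middle term. The paper passes through the three-point chain $\bar{\theta} \to \bar{\theta}_{G_0} \to \theta_{\mathsf{LS}} \to \theta^{\star}$ and controls $\bar{\theta}_{G_0} - \theta_{\mathsf{LS}}$ with a vector Bernstein inequality (\cref{lemma:robust-linreg-empmean-bound}) applied to $\sum_i (Z_i - \expec{Z_i}) M_n^{-1/2} y_i a_i$, plus the algebraic identity $\ip{M^{-1/2}a, n\alpha \theta_n} = \alpha \ip{a, \theta^{\star}} + \alpha \sum_i e_i \ip{a, M^{-1}a_i}$, which is the source of its additive $\alpha$ and which yields the sharper contribution $\mu \norm{\bm{y}} \sqrt{\alpha \log(1/\delta)}$. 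You instead bound $\theta_{G_0} - \widehat{\theta} = -M_n^{-1} \sum_{i \notin G_0} a_i y_i$ directly by Cauchy--Schwarz together with a Chernoff bound on $\abs{G_0^c}$ (the paper's \cref{lemma:G0-size}); this is more elementary---no vector Bernstein needed---but coarser, of order $\mu \norm{\bm{y}} \sqrt{\alpha n + \log(1/\delta)}$, and it suffices only because the proposition's first term already has exactly this magnitude; your additive $\alpha$ must then come entirely from the $\abs{G_0}$-versus-$n$ normalization mismatch in the filter comparison. Two sketch-level points need tightening for your route to close. First, the normalization correction $(n - \abs{G_0}) \ip{M_n^{-1/2} a, \theta_{G_0}}$ (with $\theta_{G_0}$ the good-point mean) cannot be dispatched by ``$\norm{\theta^{\star}} \leq 1$'', since the good-point mean is a random, noise- and corruption-dependent quantity rather than $\theta^{\star}$; the paper bounds it via the SVD argument in \eqref{eq:decomp-1-norm-bound}, giving $\abs{\ip{M_n^{-1/2}a, \theta_{G_0}}} \leq \sqrt{\mu}\norm{\bm{y}}/\abs{G_0}$, after which this term is dominated by the first term of \eqref{eq:robust-confidence-intervals}. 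Second, because $\Sigma_{G_0}$ is normalized by $\abs{G_0}$ while your second-moment computation is normalized by $n$, you only get $\opnorm{\Sigma_{G_0}} \leq \tfrac{n}{\abs{G_0}} \gamma_{\set{a_i}_{i=1}^n}$; you need $\abs{G_0} \gtrsim (1-\alpha)n$ (again \cref{lemma:G0-size}) and the constant slack in the filter's test $\mu < 4\lambda$ to absorb this factor---the same slack the paper implicitly relies on. Both repairs are routine, so your decomposition does deliver \eqref{eq:robust-confidence-intervals}.
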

We use this proposition in the next section to design robust and differentially private algorithms for stochastic linear bandits.

\begin{algorithm}[t]
  \caption{\texttt{Filter}($S := \set{X_{i}}_{i=1}^m$, $\lambda$)}
  \begin{algorithmic}[1]
    \STATE Compute empirical mean and covariance:
    \[
      \theta_S := \frac{1}{\abs{S}} \sum_{i \in S} X_i, \;
      \Sigma_{S} := \frac{1}{\abs{S}} \sum_{i \in S} (X_i - \theta_S)(X_i - \theta_S)^{\T}.
    \]
    \STATE Compute leading eigenpair $(\mu, v)$ of $\Sigma_{S}$.
    \IF{$\mu < 4\lambda$}
      \STATE \textbf{return} $\theta_{S}$
    \ELSE
      \STATE Compute outlier scores $\tau_i := \ip{v, X_i - \theta_S}^2$ for all $i$.
      \STATE Sample an element $Y$ with $\prob{Y = X_i} \propto \tau_i$
      \STATE \textbf{return} $\texttt{Filter}(S \setD \set{Y}, \lambda)$
    \ENDIF
  \end{algorithmic}
  \label{alg:spectral-filtering}
\end{algorithm}

\section{Robust differentially private bandits} \label{sec:dp-analysis}
In this section, we consider the requirement of \emph{differential
privacy}. In particular, we assume that the learner is an \emph{untrusted}
server; every client must therefore privatize their rewards before reporting
them to the learner. Recall that we consider two different models for
generating client responses:

\begin{enumerate}[leftmargin=4em, label=(\textbf{M\arabic*}), ref=(M\arabic*)]
  \item Every reward is obtained from a distinct client.
    \label{item:M1}
  \item All rewards associated with a distinct action $a$ are obtained
    from the same client. \label{item:M2}
\end{enumerate}

\Cref{alg:arm-elimination-robust} documents the parameter choices under each
of the models above. For our regret analysis, we rely on the following facts
for each round $i$.

\paragraph{Fact 1: The optimal arm is not eliminated.}
Let $a^{\star}$ denote the ``optimal'' action in the sence of maximizing
the inner products $\ip{a, \theta}$. Then, with high probability,
\begin{align*}
  \langle a, \widetilde{\theta} \rangle -
  \langle a^{\star}, \widetilde{\theta} \rangle &=
  \langle a, \theta^{\star}\rangle + \langle a, \widetilde{\theta} - \theta^{\star} \rangle - \langle a^{\star}, \theta^{\star} \rangle
  - \langle a^{\star}, \widetilde{\theta} - \theta^{\star}\rangle \\
  &\leq
  \langle a - a^{\star}, \theta^{\star} \rangle + 2 \gamma_i \\
  &\leq 2 \gamma_i,
\end{align*}
using the bound on the difference in the penultimate inequality and the fact that
$\langle a, \theta^{\star} \rangle \leq \langle a^{\star}, \theta^{\star} \rangle$
in the last inequality. Thus, $a^{\star}$ always satisfies the condition of the algorithm and is not eliminated.

\paragraph{Fact 2: Surviving arms have bounded gap.}
Fix an arm $a$ and let $\Delta := \langle a^{\star} - a, \theta^{\star} \rangle$ be its gap. We have
\[
\begin{aligned}
\langle a^{\star} - a, \widetilde{\theta} \rangle 
&\geq \langle a^{\star}, \theta^{\star} \rangle - \gamma_i
- (\langle a, \theta^{\star}\rangle + \gamma_i) \\
&\geq \Delta - 2 \gamma_i.
\end{aligned}
\]
Now, let $i$ be the smallest positive integer such that $\gamma_i < \Delta / 4$.
Then the above implies that
$$\langle a^{\star} - a, \widetilde{\theta} \rangle \geq 2 \gamma_i.$$ 
Consequently, any arm $a$ with gap $\Delta_a > 4 \gamma_i$ for some index $i$ will
be eliminated at the end of that round. Therefore, all arms that are active at the beginning of round $i$ will necessarily satisfy $\Delta_a \leq 4 \gamma_{i  - 1}$.

\subsection{Local differential privacy under~\ref{item:M1}} \label{sec:M1-DP}
In this setting, we can achieve pure LDP using the Laplace mechanism~\cite{DR14}.
We define
\[
  \cM(r) = r + \xi, \quad \xi \sim \mathsf{Lap}\left(\frac{2}{\epspriv}\right),
\]
where $\epspriv$ is a desired privacy parameter. Then, when queried for a response,
client $i$ reports the privatized reward:
\begin{equation}
  \hat{r}_i = \cM(r_i) = \ip{a_i, \theta^{\star}} + \eta_i + \xi_i, \quad
  \xi_i \sim \mathsf{Lap}\left(\frac{2}{\epspriv}\right).
\end{equation}
For our regret analysis, we must control the
effect of the noise $\xi_i$ on the confidence interval~\eqref{eq:robust-confidence-intervals}.
\begin{lemma}
  \label{lemma:M1-error-contrib}
  Under the model~\ref{item:M1}, with probability at least $1 - 2\delta$ we have
  \begin{equation}
    \sum_{i = 1}^n e_i \ip{a, M^{-1} a_i} \leq
    \norm{a}_{M^{-1}} \sqrt{\log(1 / \delta)} \left(
      c_1 + \frac{c_2 \sqrt{\log(1 / \delta)}}{\epspriv}
    \right).
  \end{equation}
\end{lemma}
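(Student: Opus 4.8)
The plan is to decompose the error $e_i$ into its reward-noise and privacy-noise components and bound each contribution separately, exploiting a single norm identity for the weights. Write $e_i = \eta_i + \xi_i$, where $\eta_i \sim \mathrm{SubG}(1)$ is the reward noise and $\xi_i \sim \mathsf{Lap}(2/\epspriv)$ is the privatization noise, and set $v_i := \ip{a, M^{-1} a_i}$, so that the quantity of interest splits as $\sum_i e_i v_i = \sum_i \eta_i v_i + \sum_i \xi_i v_i$. The starting observation is the identity
\[
  \sum_{i=1}^n v_i^2
  = \sum_{i=1}^n a^{\T} M^{-1} a_i a_i^{\T} M^{-1} a
  = a^{\T} M^{-1} \Big(\sum_{i=1}^n a_i a_i^{\T}\Big) M^{-1} a
  = \norm{a}_{M^{-1}}^2,
\]
which holds because $M = \sum_i a_i a_i^{\T}$. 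This controls the ``variance proxy'' of both weighted sums.

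For the subgaussian part, since the $\eta_i$ are independent, mean-zero, and $\mathrm{SubG}(1)$, the weighted sum $\sum_i \eta_i v_i$ is subgaussian with variance proxy $\sum_i v_i^2 = \norm{a}_{M^{-1}}^2$. A Hoeffding-type tail bound then gives, with probability at least $1 - \delta$, the estimate $\bigl|\sum_i \eta_i v_i\bigr| \lesssim \norm{a}_{M^{-1}} \sqrt{\log(1/\delta)}$, which supplies the $c_1$ term.

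For the privacy part, the $\xi_i$ are independent, mean-zero Laplace variables of scale $2/\epspriv$; these are \emph{subexponential} rather than subgaussian, with $\psi_1$-norm of order $1/\epspriv$. I would therefore apply Bernstein's inequality for weighted sums of subexponential variables, which produces two terms: a ``variance'' term $\tfrac{1}{\epspriv}\sqrt{(\sum_i v_i^2)\log(1/\delta)} = \tfrac{1}{\epspriv}\norm{a}_{M^{-1}}\sqrt{\log(1/\delta)}$ and a ``linear'' term $\tfrac{1}{\epspriv}\max_i |v_i|\,\log(1/\delta)$. To control the latter I use Cauchy--Schwarz, $|v_i| \leq \norm{a}_{M^{-1}}\norm{a_i}_{M^{-1}}$, together with the leverage-score bound $\norm{a_i}_{M^{-1}}^2 = a_i^{\T} M^{-1} a_i \leq 1$, valid since $a_i a_i^{\T}$ is itself one of the summands forming $M$. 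Hence $\max_i |v_i| \leq \norm{a}_{M^{-1}}$, and both Bernstein terms are dominated by $\tfrac{1}{\epspriv}\norm{a}_{M^{-1}}\log(1/\delta)$, yielding the $c_2$ term with probability at least $1 - \delta$.

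Combining the two estimates by a union bound over their failure events and factoring out $\norm{a}_{M^{-1}}\sqrt{\log(1/\delta)}$ produces the stated inequality with probability at least $1 - 2\delta$. I expect the main technical point to be the treatment of the Laplace noise: because it is only subexponential, naive subgaussian concentration is unavailable, and the argument hinges on verifying that the linear Bernstein term is harmless --- precisely the place where the leverage-score bound $\norm{a_i}_{M^{-1}} \leq 1$ is essential.
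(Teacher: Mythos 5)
Your proof is correct and follows essentially the same route as the paper's: the same decomposition $e_i = \eta_i + \xi_i$, the same identity $\sum_i \ip{a, M^{-1}a_i}^2 = \norm{a}_{M^{-1}}^2$, Hoeffding for the subgaussian part, and a Bernstein/subexponential bound for the Laplace part. If anything, you are more careful than the paper at the one delicate spot: the paper invokes the subexponential MGF bound (valid only on a restricted range of $\lambda$) and then asserts the tail bound without explicitly controlling the linear Bernstein term, whereas your leverage-score bound $\norm{a_i}_{M^{-1}} \leq 1$ combined with Cauchy--Schwarz makes that step rigorous.
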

We now bound the term $\norm{a}_{M^{-1}}$ for an arbitrary round of
the arm elimination algorithm. Below, we let $n$ denote the
total number of actions played during the current round.
\begin{lemma} \label{lemma:norm-bound-g-optimal-M1}
  Under Model~\ref{item:M1}, we have the bound:
  \begin{equation}
    \max_{a \in \cA} \norm{a}_{M^{-1}}^2 \leq \frac{2d}{n},
  \end{equation}
  where $n$ is the total number of actions played in the current
  round.
\end{lemma}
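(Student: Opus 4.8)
The plan is to sandwich the realized Gram matrix $M = M_n = \sum_{i=1}^n a_i a_i^{\T}$ between multiples of the design matrix $M(\pi) = \sum_{a} \pi(a)\, a a^{\T}$ in the Loewner order, and then to trade the \emph{planned} exploration budget $q^i$ for the \emph{realized} number of pulls $n$.

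First I would record the matrix domination. Under~\ref{item:M1} the coreset plays each $a \in \supp(\pi)$ exactly $n_a = \ceil{q^i \pi(a)} \ge q^i \pi(a)$ times, so that
\[
  M_n = \sum_{a \in \supp(\pi)} n_a\, a a^{\T}
  \;\succeq\; q^i \sum_{a \in \supp(\pi)} \pi(a)\, a a^{\T}
  = q^i M(\pi),
\]
since replacing each weight $q^i \pi(a)$ by the larger $n_a$ can only increase the sum in the positive semidefinite order. Because matrix inversion is order-reversing on positive definite matrices, this yields $M_n^{-1} \preceq q^{-i} M(\pi)^{-1}$, and hence for every $a$
\[
  \norm{a}_{M_n^{-1}}^2 = a^{\T} M_n^{-1} a
  \le q^{-i}\, a^{\T} M(\pi)^{-1} a
  = q^{-i}\, \norm{a}_{M(\pi)^{-1}}^2
  \le \frac{d}{q^i},
\]
where the final inequality uses that the optimal value of~\eqref{eq:g-opt-design-prob} equals $d$ (the discussion following~\cref{def:g-optimal-design}).

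It then remains to replace $q^i$ by $n$, and this is where the ceilings in the coreset construction become the crux. Summing $n_a = \ceil{q^i \pi(a)} \le q^i \pi(a) + 1$ over the support and invoking the support bound of~\cref{def:approx-g-opt-design} gives
\[
  n = \sum_{a \in \supp(\pi)} n_a \le q^i + \abs{\supp(\pi)} \le q^i + C d \log\log d.
\]
In the rounds that drive the regret the planned budget dominates the support size, $q^i \ge C d \log\log d$, so that $n \le 2 q^i$, i.e.\ $q^{-i} \le 2/n$; combining this with the previous display gives $\norm{a}_{M_n^{-1}}^2 \le d/q^i \le 2d/n$, as claimed.

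The step I expect to require the most care is precisely this rounding slack: the clean Loewner comparison only controls $\norm{a}_{M_n^{-1}}^2$ in terms of the planned budget $q^i$, whereas the statement is phrased through the realized count $n \ge q^i$, and the factor $2$ is exactly what absorbs the additive discrepancy $n - q^i \le \abs{\supp(\pi)}$. The early rounds in which $q^i \lesssim d \log\log d$ fall outside this regime and would have to be treated separately, but each involves only $\tilde{O}(d)$ pulls and so contributes negligibly to the overall regret. Finally, had I used the \emph{approximate} design of~\cref{def:approx-g-opt-design}, whose value is $2d$ rather than $d$, only the universal constant multiplying $d/n$ would change.
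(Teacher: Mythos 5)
Your proof is correct and rests on the same core identity as the paper's: relate the realized Gram matrix $M_n$ to the design matrix $M(\pi)$ and invoke the G-optimal design property. The difference is in how the rounding is handled. The paper's proof simply writes $M = \sum_{a \in \supp(\pi)} n_a\, aa^{\T} = n\, M(\pi)$, i.e.\ it implicitly treats $n_a = n\pi(a)$ as exact and ignores the ceilings, after which $\norm{a}_{M^{-1}}^2 = \norm{a}_{M^{-1}(\pi)}^2 / n \le 2d/n$ is immediate from \cref{def:approx-g-opt-design}. You instead keep the ceilings honest via the Loewner comparison $M_n \succeq q^i M(\pi)$ and then convert $q^i$ to $n$ using $n \le q^i + \abs{\supp(\pi)} \le 2q^i$; this is more rigorous than the paper's one-line identity (which is false as stated whenever $q^i\pi(a)$ is not an integer), at the cost of two caveats you yourself flag: the argument needs $q^i \gtrsim d\log\log d$, so the first few rounds require separate (but regret-negligible) treatment, and since the algorithm actually uses the \emph{approximate} design of \cref{def:approx-g-opt-design} with value $2d$ rather than the exact value $d$, your chain delivers $4d/n$ rather than the stated $2d/n$. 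Neither caveat matters downstream, since the lemma is only ever consumed inside bounds that hold up to universal constants, but the constant mismatch means your writeup proves the lemma with $2d$ replaced by $4d$ unless you tighten the rounding step (e.g.\ by noting $n_a \ge \max\set{q^i\pi(a), 1} \ge \tfrac{n}{2}\pi(a)$ in the relevant regime, which is the same bookkeeping).
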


In addition, we control the contribution of $\sqrt{\sum_{i=1}^n y_i^2}$ to the
robust confidence interval in~\cref{prop:robust-confidence-intervals}.
\begin{lemma} \label{lemma:M1-y-norm}
  With probability at least $1 - \delta$, we have
  \begin{equation}
    \sqrt{\sum_{i = 1}^n y_i^2} \lesssim
    \sqrt{n} \left(
      1 + \sqrt{\log(n / \delta)} + 
      \frac{\log(n / \delta)}{\epspriv}
    \right).
  \end{equation}
\end{lemma}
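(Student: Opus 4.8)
The plan is to bound $\sum_{i=1}^n y_i^2$ directly by splitting each privatized reward into its three additive sources of magnitude. Under \ref{item:M1} the ``clean'' observations fed into the regression are the Laplace-privatized rewards $y_i = \ip{a_i, \theta^{\star}} + \eta_i + \xi_i$, with $\eta_i \sim \mathrm{SubG}(1)$ and $\xi_i \sim \mathsf{Lap}(2/\epspriv)$ independent. Applying $(u+v+w)^2 \le 3(u^2+v^2+w^2)$ termwise gives
\[
  \sum_{i=1}^n y_i^2 \le 3\sum_{i=1}^n \ip{a_i, \theta^{\star}}^2 + 3\sum_{i=1}^n \eta_i^2 + 3\sum_{i=1}^n \xi_i^2,
\]
so it suffices to control each of the three sums on the right and then take a square root via $\sqrt{a+b+c} \le \sqrt{a}+\sqrt{b}+\sqrt{c}$.

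The first sum is handled deterministically: since $\norm{a_i} \le 1$ and $\norm{\theta^{\star}} \le 1$, Cauchy--Schwarz gives $\ip{a_i, \theta^{\star}}^2 \le 1$, hence $\sum_i \ip{a_i, \theta^{\star}}^2 \le n$, which produces the leading ``$1$'' inside the parentheses of the claimed bound. For the remaining two sums I would use the crude but convenient estimate $\sum_i v_i^2 \le n \max_i v_i^2$ and control the maxima by a tail bound plus a union bound over the $n$ coordinates. For the subgaussian noise, $\prob{|\eta_i| > t} \le 2e^{-t^2/2}$ yields $\max_i |\eta_i| \lesssim \sqrt{\log(n/\delta)}$ with probability at least $1 - \delta/2$, so $\sum_i \eta_i^2 \lesssim n\log(n/\delta)$; this is exactly the source of the $\sqrt{\log(n/\delta)}$ term. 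For the privacy noise, the Laplace tail $\prob{|\xi_i| > t} = e^{-t\epspriv/2}$ gives $\max_i |\xi_i| \lesssim \log(n/\delta)/\epspriv$ with probability at least $1 - \delta/2$, whence $\sum_i \xi_i^2 \lesssim n\log^2(n/\delta)/\epspriv^2$, producing the $\log(n/\delta)/\epspriv$ term after the square root. A final union bound over the two high-probability events keeps the total failure probability below $\delta$.

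There is no genuinely hard step here; the argument is a routine concentration estimate, and the main thing to get right is selecting the correct tail bound for each noise source and the bookkeeping of the union-bound thresholds. The one modeling point worth flagging is why I deliberately use the loose estimate $\sum_i \eta_i^2 \le n \max_i \eta_i^2$ rather than a sharper Bernstein-type concentration of $\sum_i \eta_i^2$ about its mean: the Laplace contribution forces a maximum-based argument anyway (the square of a Laplace variable is only sub-Weibull, so its sum of squares lacks a clean subexponential concentration), and using the same device for both noises yields the three error contributions in the uniform form $1 + \sqrt{\log(n/\delta)} + \log(n/\delta)/\epspriv$ that matches the statement. Combining the three bounds and taking square roots, with an absolute constant absorbed into $\lesssim$, then gives the claim.
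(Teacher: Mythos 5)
Your proof is correct and follows essentially the same route as the paper: the paper bounds $\norm{\bm{y}} \leq \sqrt{n}\norm{\bm{y}}_{\infty}$ and splits the maximum via the triangle inequality into the signal term, $\max_i \abs{\eta_i} \lesssim \sqrt{\log(n/\delta)}$ (subgaussian maxima), and $\max_i \abs{\xi_i} \lesssim \log(n/\delta)/\epspriv$ (subexponential tails plus a union bound), exactly the two tail estimates you use. Your square-then-split decomposition via $(u+v+w)^2 \leq 3(u^2+v^2+w^2)$ is only a cosmetic variation of the same argument.
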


\begin{theorem}
  \label{theorem:M1-regret}
  Under Model~\ref{item:M1}, the expected regret of~\cref{alg:arm-elimination-robust}
  is equal to
  \begin{equation}
    \sqrt{Td \log(T / \delta)}\left(1 + \frac{\sqrt{\log(T/\delta)}}{\epspriv}\right) + T \sqrt{\log(T / \delta)}\max\set{\sqrt{\alpha d}, \alpha d}\left(
      1 + \frac{\log(T/\delta)}{\epspriv}
    \right),
    \label{eq:M1-regret}
  \end{equation}
  up to a dimension-independent multiplicative constant.
\end{theorem}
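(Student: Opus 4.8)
The plan is to treat the per-round width $\gamma_i$ from \cref{alg:arm-elimination-robust} as a valid high-probability bound on $|\langle a, \widetilde{\theta}_i - \theta^{\star}\rangle|$ and then run the standard phased-elimination accounting, where the only real work is bookkeeping the two qualitatively different pieces of $\gamma_i$: a part that decays with the per-round sample size and a part that stays constant in the corruption level $\alpha$. First I would establish the confidence event. Fix a round $i$ with $n \approx q^i$ pulls and apply \cref{prop:robust-confidence-intervals} to the privatized rewards $\hat r_j$, which play the role of the $y_j$ in \eqref{eq:robust-confidence-intervals}. That bound has five terms; I would control $\max_{a}\norm{a}_{M^{-1}}$ by $\sqrt{2d/n}$ via \cref{lemma:norm-bound-g-optimal-M1}, control $\sqrt{\sum_j \hat r_j^2}$ via \cref{lemma:M1-y-norm}, and control the linear error term $\sum_j e_j \langle a, M^{-1} a_j\rangle$ via \cref{lemma:M1-error-contrib}. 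Grouping the resulting expression shows each surviving arm satisfies $|\langle a, \widetilde{\theta}_i - \theta^{\star}\rangle| \le \gamma_i$, with the three additive blocks of $\gamma_i$ absorbing, respectively, the corruption-driven terms that do not vanish as $n\to\infty$, the stray constant $\alpha$, and the statistical/privacy fluctuation of order $\sqrt{d\log(1/\delta)/n}\,(1+\sqrt{\log(1/\delta)}/\epspriv)$. A union bound over the $\le B$ rounds and the active arms (taking a per-event failure probability $\delta'\asymp \delta/(BK)$, or simply $\delta\asymp 1/T$) makes this hold simultaneously; on the complementary event the regret is at most $2T$ and contributes only a lower-order additive term to the expectation.

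Given the confidence event, Fact 1 and Fact 2 do the structural work: the optimal arm $a^{\star}$ is never eliminated, and every arm still active at the start of round $i$ has gap $\Delta_a \le 4\gamma_{i-1}$. I would then decompose the expected regret by round. In round $i$ each active arm $a$ is played $n_a = \lceil q^i\pi(a)\rceil$ times and each pull costs at most its gap, so the contribution of round $i$ is at most $4\gamma_{i-1}\sum_{a\in\supp(\pi)} n_a \le 4\gamma_{i-1}\,(q^i + C d\log\log d)$; the final round plays the single best surviving arm for the residual budget, again at per-pull cost $\le 4\gamma_{B-1}$. Summing and adding the low-probability contribution yields
\[
  \expec{R_T} \lesssim \sum_{i=1}^{B} \gamma_{i-1}\,\big(q^i + Cd\log\log d\big) + O(1).
\]

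It remains to evaluate this geometric sum after splitting $\gamma_{i-1}$ into its decaying and non-decaying pieces. For the decaying piece $\sqrt{d\log(1/\delta)/q^{i}}\,(1+\sqrt{\log(1/\delta)}/\epspriv)$, the relevant series is $\sum_i q^i / \sqrt{q^{i-1}}$, geometric with ratio $\sqrt q$ and dominated by its last term $q^{(B+1)/2}=\sqrt q\,\sqrt{q^{B}}=O(\sqrt T)$, because $q = T^{1/B} = O(1)$ when $B=\Theta(\log T)$; this produces the first claimed term $\sqrt{Td\log(T/\delta)}\,(1+\sqrt{\log(T/\delta)}/\epspriv)$. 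For the non-decaying piece $\sqrt d\,(\sqrt{\log(q^{i}/\delta)}+\log(q^{i}/\delta)/\epspriv)(\sqrt\alpha+\alpha\sqrt d)+\alpha$, the sum $\sum_i q^i$ is again dominated by its last term and is $O(T)$, while $\log(q^{i}/\delta)\le\log(T/\delta)$; using $\sqrt d(\sqrt\alpha+\alpha\sqrt d)=\sqrt{\alpha d}+\alpha d\le 2\max\{\sqrt{\alpha d},\alpha d\}$ gives the second claimed term $T\sqrt{\log(T/\delta)}\max\{\sqrt{\alpha d},\alpha d\}\,(1+\log(T/\delta)/\epspriv)$, into which the residual $\alpha T$ and $\sqrt{\alpha\log(1/\delta)}$ contributions are absorbed. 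The $Cd\log\log d$ coreset overhead adds at most a $\operatorname{polylog}$-in-$d$ correction per round and is dominated.

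The main obstacle I anticipate is the first step: the term-by-term matching of the five quantities in \eqref{eq:robust-confidence-intervals} to the three blocks of $\gamma_i$, keeping the powers of $d$, the $\log(\cdot/\delta)$ factors, and the $\epspriv$ dependence consistent, and propagating the union bound without inflating $\log(1/\delta)$ to $\log(T/\delta)$ by more than constants. By comparison, the summation in the last step is routine once the decaying/non-decaying split is fixed; the only care needed there is that $q=T^{1/B}$ stays $O(1)$ and that the $\lceil\cdot\rceil$ rounding and support-size overhead do not accumulate across rounds.
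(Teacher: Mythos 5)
Your proposal is correct and takes essentially the same route as the paper's own proof: the same ingredients (\cref{prop:robust-confidence-intervals}, \cref{lemma:norm-bound-g-optimal-M1}, \cref{lemma:M1-y-norm}, \cref{lemma:M1-error-contrib}) to validate $\gamma_i$, the same Facts 1--2 elimination structure, and the same geometric-sum accounting that splits $\gamma_{i-1}$ into a decaying statistical piece (summing to $\sqrt{T}$-order since $q = T^{1/B} = O(1)$ for $B = \Theta(\log T)$) and a non-decaying corruption piece (summing to $T$-order). The only differences are cosmetic: you explicitly track the $\lceil\cdot\rceil$ rounding, the $d\log\log d$ coreset overhead, and the failure-event contribution to the expectation, which the paper absorbs into its appeal to standard arguments.
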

\subsection{Local differential privacy under~\ref{item:M2}} \label{sec:M2-DP}
In this setting, every client achieves differential privacy by aggregating
their responses before transmitting them to the server.
In particular, let $n_{a}$ denote the number of times action $a$ is played
during the current round. The parameter $n_{a}$ can be considered public,
since it is known to the untrusted server. Then, client $a$ may report
\begin{equation}
  \hat{r}_a =
    \cM_{a}\left(\frac{1}{n_{a}} \sum_{i = 1}^{n_{a}} \ip{a, \theta^{\star}} + \eta_i\right) = \frac{1}{n_a} \sum_{i = 1}^{n_{a}} \ip{a, \theta^{\star}} + \eta_i + \xi_{a},
  \label{eq:ldp-averaged}
\end{equation}
where $\eta_i \sim \mathrm{SubG}(1)$ and $\xi_{a}$ is Laplace noise.
The amount of noise needed to achieve privacy scales inversely with $n_{a}$.
\begin{lemma} \label{lemma:laplace-noise-ldp-averaged}
  With $\xi_{a} \sim \mathsf{Lap}\left(\frac{2}{n_{a} \epspriv}\right)$, the mechanism
  $\cM_{a}$ in~\eqref{eq:ldp-averaged} is $\epspriv$-differentially private.
\end{lemma}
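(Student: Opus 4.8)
The plan is to reduce the claim to the standard Laplace-mechanism guarantee for a real-valued query of bounded $\ell_1$-sensitivity~\cite{DR14}. Observe that the mechanism $\cM_a$ in~\eqref{eq:ldp-averaged} is exactly post-processing of the scalar query
\[
  g(r_1, \dots, r_{n_a}) := \frac{1}{n_a} \sum_{i=1}^{n_a} r_i,
  \qquad r_i = \ip{a, \theta^{\star}} + \eta_i,
\]
composed with the addition of a single Laplace draw $\xi_a \sim \mathsf{Lap}\!\left(\frac{2}{n_a \epspriv}\right)$. Since the relevant ``data'' under \cref{def:bandit-dp} are the individual rewards $r_i$, it suffices to bound the $\ell_1$-sensitivity of $g$ with respect to a one-index change and then invoke the Laplace mechanism; the noise scale in the lemma is chosen precisely to match this sensitivity divided by $\epspriv$.

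First I would compute the sensitivity. By \cref{def:bandit-dp}, two neighboring reward sequences differ in at most one index, say $r_j \neq r_j'$ while $r_i = r_i'$ for all $i \neq j$. Then
\[
  \abs{g(r_1,\dots,r_{n_a}) - g(r_1',\dots,r_{n_a}')}
  = \frac{1}{n_a}\abs{r_j - r_j'} \le \frac{2}{n_a},
\]
where the final inequality uses that each reported reward lies in an interval of width at most $2$ (under the normalization $\norm{a} \le 1$, $\norm{\theta^{\star}} \le 1$ together with the clipping/boundedness convention on the reported reward that places $r_j, r_j' \in [-1,1]$). Hence $g$ has $\ell_1$-sensitivity $\Delta_g \le \frac{2}{n_a}$. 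With this in hand, the conclusion is immediate: the Laplace mechanism applied to a query of sensitivity $\Delta_g$ with noise scale $\Delta_g/\epspriv = \frac{2}{n_a\epspriv}$ is $\epspriv$-differentially private, and post-processing by the deterministic averaging map does not weaken this guarantee. I would also note in one line that a global one-reward change affects only the single client owning the corresponding action, so the per-client guarantee transfers to the overall protocol.

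The only point requiring care — and the main (if minor) obstacle — is the boundedness invoked in the sensitivity bound: the subgaussian model in~\eqref{eq:noise-model} is a priori unbounded, so the argument implicitly assumes each client clips its per-round reward to $[-1,1]$ (or that rewards are otherwise bounded in a width-$2$ range) before averaging. I would make this convention explicit, since DP sensitivity is a worst-case quantity over neighboring inputs and cannot rely on the noise being small with high probability. Under that convention the factor of $2$ in the Laplace scale is exactly tight, and no further probabilistic argument is needed.
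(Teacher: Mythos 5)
Your proof is correct and is exactly the argument the paper implicitly relies on: \cref{lemma:laplace-noise-ldp-averaged} is stated in the paper with no accompanying proof, being treated as the standard Laplace-mechanism guarantee, and your computation --- the averaged query has $\ell_1$-sensitivity at most $\frac{1}{n_a}\abs{r_j - r_j'} \le \frac{2}{n_a}$ under a one-reward change, so Laplace noise of scale $\frac{2}{n_a \epspriv}$ yields $\epspriv$-DP --- is precisely that guarantee. One cosmetic remark: whether you view the query as the average (sensitivity $2/n_a$, noise $\mathsf{Lap}(2/(n_a\epspriv))$) or as the sum (sensitivity $2$, noise $\mathsf{Lap}(2/\epspriv)$) followed by division by $n_a$, the resulting mechanism and guarantee are identical, so your ``post-processing'' phrasing, while slightly mixed, is harmless. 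Your boundedness caveat is also legitimate and worth stating: the rewards in~\eqref{eq:noise-model} contain unbounded subgaussian noise, so the worst-case sensitivity is finite only under a clipping convention confining each reported reward to an interval of width $2$; the paper adopts this convention silently, both here and in its Model~\ref{item:M1} mechanism $\cM(r) = r + \xi$ with $\xi \sim \mathsf{Lap}(2/\epspriv)$.
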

Recall that in this model, the arm elimination algorithm follows the 
modifications below:
\begin{enumerate}
  \item We receive $\abs{\supp(\pi)}$ distinct responses in each round.
  \item Every action $a \in \mathrm{supp}(\pi)$ is played a total of
    $n_{a} = \ceil{m \max\set{{\pi}(a), \nu}}$.
\end{enumerate}
Now, let $\cH = \set{a \mid a \in \mathrm{supp}({\pi})}$. We have the following analogue
of~\cref{lemma:M1-error-contrib}:
\begin{lemma}
  \label{lemma:M2-error-contrib}
  Under the model~\ref{item:M2}, with probability at least $1 - 2\delta$ we have
  \begin{equation}
    \sum_{v \in \cH} e_{v} \ip{a, M^{-1} v} \leq
    \frac{\norm{a}_{M^{-1}}}{\sqrt{\nu m}} \sqrt{\log(1 / \delta)} \left(
      c_1 + \frac{c_2}{\epspriv} \sqrt{\frac{\log(1 / \delta)}{\nu m}}
    \right),
  \end{equation}
  where $e_{v} = \cM(r_{v}) - \ip{v, \theta^{\star}}$ and $M =
  \sum_{a \in \cH} aa^{\T}$.
\end{lemma}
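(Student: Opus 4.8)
The plan is to mirror the proof of~\cref{lemma:M1-error-contrib}, isolating the contribution of the aggregated subgaussian reward noise from that of the Laplace privatization noise. Writing $w_v := \ip{a, M^{-1} v}$, I first decompose the per-client error according to~\eqref{eq:ldp-averaged} as $e_v = \frac{1}{n_v}\sum_{i=1}^{n_v}\eta_{v,i} + \xi_v$, where the $\eta_{v,i}\sim\mathrm{SubG}(1)$ are the clean reward noises and $\xi_v\sim\mathsf{Lap}(2/(n_v\epspriv))$ is the privacy noise (see~\cref{lemma:laplace-noise-ldp-averaged}). This splits the quantity of interest into a subgaussian sum $\sum_{v\in\cH} w_v\,\frac{1}{n_v}\sum_i\eta_{v,i}$ and a weighted Laplace sum $\sum_{v\in\cH} w_v\xi_v$, which I bound separately and then combine by a union bound over two events of probability $1-\delta$ each.

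Before bounding either term, I record three elementary facts. First, the identity $\sum_{v\in\cH} w_v^2 = a^\T M^{-1}\big(\sum_{v\in\cH} vv^\T\big)M^{-1}a = \norm{a}_{M^{-1}}^2$, which uses the definition $M=\sum_{v\in\cH} vv^\T$. Second, since $vv^\T\preceq M$ for every $v\in\cH$, we have $\norm{v}_{M^{-1}}^2 = v^\T M^{-1}v \leq 1$, hence $\abs{w_v}\leq\norm{a}_{M^{-1}}\norm{v}_{M^{-1}}\leq\norm{a}_{M^{-1}}$ by Cauchy--Schwarz. Third, by the choice $n_v=\ceil{m\max\set{\pi(v),\nu}}\geq\nu m$, every scale satisfies $1/n_v\leq 1/(\nu m)$.

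For the subgaussian term, each summand $w_v\,\frac{1}{n_v}\sum_i\eta_{v,i}$ has variance proxy $w_v^2/n_v$, so by independence across clients the whole sum is subgaussian with proxy $\sum_v w_v^2/n_v \leq \frac{1}{\nu m}\sum_v w_v^2 = \norm{a}_{M^{-1}}^2/(\nu m)$, invoking the first and third facts. A standard subgaussian tail bound then gives, with probability $1-\delta$, a bound of order $\frac{\norm{a}_{M^{-1}}}{\sqrt{\nu m}}\sqrt{\log(1/\delta)}$, which is precisely the $c_1$ term.

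For the Laplace term I would apply a Bernstein-type inequality for sums of independent, centered subexponential variables. Writing each summand's Orlicz norm as $\norm{w_v\xi_v}_{\psi_1}\asymp\abs{w_v}/(n_v\epspriv)$, the variance-like quantity is $\sum_v w_v^2/(n_v\epspriv)^2\leq\norm{a}_{M^{-1}}^2/(\nu m\,\epspriv)^2$ while the maximal Orlicz norm is $\max_v\abs{w_v}/(n_v\epspriv)\leq\norm{a}_{M^{-1}}/(\nu m\,\epspriv)$, using all three facts. The subexponential (large-deviation) term then contributes, with probability $1-\delta$, a bound of order $\frac{\norm{a}_{M^{-1}}}{\nu m}\cdot\frac{\log(1/\delta)}{\epspriv}$, which is exactly the $c_2$ term; a short calculation shows the subgaussian regime of the Bernstein bound is smaller by a factor $\sqrt{\log(1/\delta)}$ and is therefore absorbed. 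The main obstacle is precisely this step: it is the crude estimate $\abs{w_v}\leq\norm{a}_{M^{-1}}$, rather than a per-client bound that would inject an extra dimension factor, that makes the subexponential term match the claimed $c_2$ term, and this is exactly where the operator-norm fact $vv^\T\preceq M$ is essential.
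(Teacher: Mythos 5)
Your proof is correct and follows essentially the same route as the paper's: the paper also decomposes $e_v$ into a $\mathrm{SubG}(1/n_v)$ aggregate plus $\mathsf{Lap}(2/(n_v\epspriv))$ noise, reuses the key identity $\sum_{v\in\cH}\ip{a,M^{-1}v}^2=\norm{a}_{M^{-1}}^2$ together with Hoeffding for the subgaussian part and a Bernstein-type (restricted-MGF) bound for the Laplace part, and finishes with $n_v\geq\nu m$ and a union bound. If anything, your version is slightly more careful than the paper's (which rescales and ``traces'' the Model~\ref{item:M1} proof with a single $n_a$): you keep the heterogeneous scales $1/n_v$ inside the variance proxies and make explicit the bound $\abs{\ip{a,M^{-1}v}}\leq\norm{a}_{M^{-1}}$ needed for the subexponential term, which the paper leaves implicit.
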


\begin{lemma} \label{lemma:norm-bound-g-optimal-M2}
  Under Model~\ref{item:M2}, we have the bound:
  \begin{equation}
    \max_{a \in \cA} \norm{a}_{M^{-1}}^2 \leq 2d.
  \end{equation}
\end{lemma}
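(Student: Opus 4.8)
The plan is to compare the unweighted Gram matrix $M = \sum_{a \in \cH} aa^{\T}$ appearing in this model against the \emph{weighted} design matrix $M(\pi) = \sum_{a \in \cA} \pi(a) aa^{\T}$ furnished by the approximate $G$-optimal design of \cref{def:approx-g-opt-design}, and then transfer the bound $\max_{a \in \cA} \norm{a}_{M(\pi)^{-1}}^2 \leq 2d$ guaranteed there. The whole argument reduces to a single Loewner-order comparison.

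First I would observe that, since $\pi(a) = 0$ for $a \notin \supp(\pi)$, we may write $M(\pi) = \sum_{a \in \cH} \pi(a) aa^{\T}$ over the same index set $\cH = \supp(\pi)$ that defines $M$. Because $\pi$ is a probability distribution we have $0 \leq \pi(a) \leq 1$ for every $a$, so each summand obeys $\pi(a) aa^{\T} \preceq aa^{\T}$ in the Loewner order; summing over $a \in \cH$ yields $M(\pi) \preceq M$. In particular $M \succeq M(\pi) \succ 0$, so $M$ is invertible, which is needed for the statement to make sense.

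Next, invoking the order-reversing property of matrix inversion on positive-definite matrices, $M(\pi) \preceq M$ gives $M^{-1} \preceq M(\pi)^{-1}$. Hence for every $a \in \cA$,
\[
  \norm{a}_{M^{-1}}^2 = a^{\T} M^{-1} a \leq a^{\T} M(\pi)^{-1} a = \norm{a}_{M(\pi)^{-1}}^2 \leq 2d,
\]
where the final inequality is exactly the approximate $G$-optimal design guarantee. Taking the maximum over $a \in \cA$ closes the proof.

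I do not anticipate a genuine obstacle here; the only point that warrants care is conceptual rather than technical. Unlike Model~\ref{item:M1}, the responses are aggregated per action, so each distinct arm contributes to $M$ with \emph{unit} weight rather than with multiplicity $n_a$. This is precisely why the bound here is $2d$ rather than the $2d/n$ of \cref{lemma:norm-bound-g-optimal-M1}: there is no averaging over repeated plays to shrink the norm, and the comparison $M \succeq M(\pi)$ faithfully captures that loss.
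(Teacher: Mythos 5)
Your proof is correct and follows essentially the same route as the paper: the paper likewise observes that $M = \sum_{v \in \cH} vv^{\T} \succeq \sum_{v \in \cH} \pi(v)\, vv^{\T} = M(\pi)$ (since $\pi(v) \leq 1$), inverts this Loewner comparison, and invokes the approximate $G$-optimal design bound $\norm{a}_{M^{-1}(\pi)}^2 \leq 2d$. Your write-up is in fact slightly more careful than the paper's (which contains a typographical slip in its displayed chain), as you explicitly note invertibility and the order-reversing property of inversion on positive-definite matrices.
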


We also have the following analogue of~\cref{lemma:M1-y-norm}.
\begin{lemma} \label{lemma:M2-y-norm}
  With probability at least $1 - \delta$, we have
  \begin{equation}
    \sqrt{\sum_{v \in \mathrm{supp}(\hat{\pi})} y_v^2} \lesssim
    1 + \sqrt{\frac{\log(\abs{\mathrm{supp}({\pi})} / \delta)}{\nu m}} +
      \frac{\log(\abs{\mathrm{supp}({\pi})} / \delta)}{\nu m \epspriv}.
    \label{eq:M2-y-norm}
  \end{equation}
\end{lemma}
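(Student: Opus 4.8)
The plan is to decompose each reported reward $y_v$ into a bounded signal term and two independent noise contributions, control each by a high-probability tail estimate, and recombine. By the aggregation rule in~\eqref{eq:ldp-averaged}, for every $v \in \supp(\pi)$ we may write
\[
  y_v = \ip{v, \theta^{\star}} + \bar{\eta}_v + \xi_v, \qquad
  \bar{\eta}_v := \frac{1}{n_v}\sum_{i=1}^{n_v} \eta_i, \quad
  \xi_v \sim \mathsf{Lap}\!\left(\frac{2}{n_v \epspriv}\right),
\]
where the $\eta_i \sim \mathrm{SubG}(1)$ are independent and $n_v = \ceil{m \max\set{\pi(v), \nu}} \geq \nu m$ by the sampling rule in this model. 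The structural feature I would exploit throughout is this uniform lower bound $n_v \geq \nu m$, which is exactly what produces the inverse powers of $\nu m$ in the two noise terms of the claimed bound.

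I would first dispose of the signal: since $\norm{v} \leq 1$ and $\norm{\theta^{\star}} \leq 1$, Cauchy--Schwarz gives $\abs{\ip{v, \theta^{\star}}} \leq 1$, accounting for the leading constant. For the averaged subgaussian term, $\bar{\eta}_v$ is zero-mean subgaussian with variance proxy $1/n_v \leq 1/(\nu m)$, so a subgaussian tail bound combined with a union bound over the $k := \abs{\supp(\pi)}$ distinct actions gives $\abs{\bar{\eta}_v} \lesssim \sqrt{\log(k/\delta)/(\nu m)}$ for all $v$ simultaneously, with probability at least $1 - \delta/2$; this yields the second term. For the Laplace term, a Laplace tail bound gives $\abs{\xi_v} \lesssim \frac{\log(k/\delta)}{n_v \epspriv} \leq \frac{\log(k/\delta)}{\nu m \epspriv}$ for all $v$ simultaneously, with probability at least $1 - \delta/2$; this yields the third term. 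Note that the Laplace tail contributes a full $\log$ factor rather than $\sqrt{\log}$, which is precisely the asymmetry visible between the second and third terms of~\eqref{eq:M2-y-norm}.

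Intersecting the two good events (probability at least $1 - \delta$), each reported reward satisfies $\abs{y_v} \lesssim 1 + \sqrt{\log(k/\delta)/(\nu m)} + \frac{\log(k/\delta)}{\nu m \epspriv}$, so that every $y_v$ is controlled termwise by the right-hand side of~\eqref{eq:M2-y-norm}. The remaining step is to pass from these per-action estimates to the bound on $\sqrt{\sum_v y_v^2}$, which I would do by treating the three groups of terms separately --- the signal aggregate $\sum_v \ip{v, \theta^{\star}}^2$ via $\norm{v}, \norm{\theta^{\star}} \leq 1$, and the two noise aggregates via the simultaneous tail estimates above. This parallels the argument behind the analogue under Model~\ref{item:M1},~\cref{lemma:M1-y-norm}, with two differences that drive the different scaling: the sum here ranges over the $k$ aggregated responses rather than over all $n$ individual plays, and each response's noise is attenuated by averaging over $n_v \geq \nu m$ samples.

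I expect the main difficulty to be bookkeeping rather than anything conceptually deep. One must track how the per-action count $n_v$ enters each tail bound and replace it uniformly by the worst case $\nu m$, split the failure budget across the $k$ distinct actions so that all subgaussian and Laplace estimates hold simultaneously, and --- most delicately --- carry out the final aggregation, reconciling $\sqrt{\sum_v y_v^2}$ over the $k$ responses against the stated right-hand side so that the factors of $k = \abs{\supp(\pi)}$ combine as intended with the $\nu m$ denominators (and so that the full $\log(k/\delta)$ factor in the Laplace term is preserved).
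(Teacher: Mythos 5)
Your proposal retraces the paper's own proof step for step: the same decomposition $y_v = \ip{v,\theta^{\star}} + \bar{\eta}_v + \xi_v$, the same bound $\abs{\ip{v,\theta^{\star}}} \leq 1$ on the signal, the same union bound over the $k = \abs{\supp(\pi)}$ responses for the subgaussian maxima with variance proxy $1/n_v \leq 1/(\nu m)$, and the same subexponential tail bound giving $\max_v \abs{\xi_v} \lesssim \log(k/\delta)/(\nu m \epspriv)$. As far as technique goes, there is nothing to distinguish the two.

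However, the step you defer --- ``reconciling $\sqrt{\sum_v y_v^2}$ over the $k$ responses against the stated right-hand side'' --- is a genuine gap, and you should know the paper does not close it either. The paper's proof aggregates via $\norm{\bm{y}} \leq \sqrt{k}\, \norm{\bm{y}}_{\infty}$ and then asserts the lemma with the factor $\sqrt{k}$ silently dropped (``taking another union bound and relabelling yields the result''). That factor cannot be removed: each aggregated response has mean $\ip{v, \theta^{\star}}$, so the signal contribution alone gives $\sum_v y_v^2 \approx \sum_{v \in \supp(\pi)} \ip{v, \theta^{\star}}^2$ up to noise that vanishes as $\nu m$ grows, and nothing in \cref{def:approx-g-opt-design} prevents this sum from being of order $k$ --- for instance, when the design support contains many nearly parallel unit vectors aligned with $\theta^{\star}$. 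In that case $\sqrt{\sum_v y_v^2} \gtrsim \sqrt{k}$ while the right-hand side of \eqref{eq:M2-y-norm} is $O(1)$; since the leading ``$1$'' carries no $\nu m$ attenuation, no bookkeeping with the $\nu m$ denominators can absorb the $\sqrt{k}$ riding on it. The honest output of your argument (and of the paper's) is \eqref{eq:M2-y-norm} multiplied by $\sqrt{k} \lesssim \sqrt{d \log\log d}$, which, if propagated, inflates the $\sqrt{k\alpha}$-type corruption terms in \cref{theorem:M2-regret} by the same factor. In short, your proposal is exactly as complete as the paper's proof; the unresolved $\sqrt{k}$ is a defect of the statement rather than of your plan, but the final ``delicate'' step you were hoping for does not exist.
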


\begin{theorem}
  \label{theorem:M2-regret}
  Under Model~\ref{item:M2}, the regret of~\cref{alg:arm-elimination-robust} scales as
  \begin{equation}
    \begin{aligned}[t]
      & (1 + \nu d \log \log d) \left(
        \sqrt{\frac{d T \log(1 / \delta)}{\nu}} +
        \frac{\log(1 / \delta) \log(T) \sqrt{d}}{\epspriv \sqrt{\nu}}
      \right) \\
      & + 2d \left(\sqrt{\alpha d \log \log d} + \sqrt{\alpha \log(1 / \delta)}\right)
      \left(T + \sqrt{\frac{T d \log \log d / \delta)}{\nu}} + \frac{\log(d \log \log d / \delta)}{\nu} \frac{\log T}{\epspriv}\right) \\
      & + \alpha T.
    \end{aligned}
    \label{eq:M2-regret}
  \end{equation}
\end{theorem}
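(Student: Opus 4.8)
The plan is to reduce the bound to the standard per-round accounting for phased arm elimination, driven by the confidence half-width $\gamma_i$, and then to make $\gamma_i$ explicit using the Model-\ref{item:M2} estimates already established. I would first condition on the intersection of the events on which \cref{prop:robust-confidence-intervals} holds in each of the $B$ rounds. On this event, Fact 1 guarantees the optimal arm is never eliminated, and Fact 2 guarantees that every arm still active at the start of round $i$ has gap at most $4\gamma_{i-1}$. Since every one of the $N_i := \sum_{a \in \supp(\pi)} n_a$ plays in round $i$ is of an active arm, the regret incurred in round $i$ is at most $4 N_i \gamma_{i-1}$, and the conditional regret is bounded by $\sum_{i=1}^{B} 4 N_i \gamma_{i-1}$ (the final exploitation round fits this template with $i = B$). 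Because $n_a = \lceil q^i \max\{\pi(a),\nu\}\rceil$ with $\sum_a \pi(a) = 1$ and $\abs{\supp(\pi)} \lesssim d\log\log d$, I would bound $N_i \lesssim q^i(1 + \nu d \log\log d)$, which isolates the prefactor $(1 + \nu d\log\log d)$ in \eqref{eq:M2-regret} (the ceilings contribute only lower-order terms).

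Next I would make $\gamma_i$ explicit by substituting the Model-\ref{item:M2} bounds into \cref{prop:robust-confidence-intervals}: the width bound $\max_{a}\norm{a}_{M^{-1}}^2 \le 2d$ from \cref{lemma:norm-bound-g-optimal-M2}, the reward-norm bound from \cref{lemma:M2-y-norm}, and the error contribution $\sum_{v}e_v\ip{a,M^{-1}v}$ from \cref{lemma:M2-error-contrib}, while noting that the filter processes exactly $k := \abs{\supp(\pi)}$ aggregated responses, so the quantity $n$ in the proposition is played by $k$. This reproduces the threshold $\gamma_i$ recorded in \cref{alg:arm-elimination-robust} with $m = q^i$. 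I would then group $\gamma_{i-1}$ into a signal term scaling like $\sqrt{d\log(1/\delta)/(\nu m)}$, a privacy term carrying the extra $1/\epspriv$ and scaling (up to logarithms) like $1/(\nu m)$, and a robustness/bias term of the form $2d\,(1 + o(1))\,(\sqrt{k\alpha} + \sqrt{\alpha\log(1/\delta)}) + \alpha$.

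The core of the argument is the geometric summation $\sum_i N_i \gamma_{i-1}$ with $m = q^{i-1}$, $q = T^{1/B} = \Theta(1)$, and $N_i \lesssim q^{i}(1 + \nu d\log\log d)$, so that each group of $\gamma_{i-1}$ is multiplied by a factor $\Theta(q^{i-1})$. I would then use three summation templates over the geometric sequence $q^{i-1}$ (whose largest term is $\Theta(T)$): terms proportional to $\sqrt{m}$ sum to $\Theta(\sqrt{T})$ and yield the $\sqrt{dT\log(1/\delta)/\nu}$ contribution; terms independent of $m$ sum to $\Theta(B) = \Theta(\log T)$ and yield the $\log(T)/\epspriv$ contributions in both blocks; and terms proportional to $m$ sum to $\Theta(T)$ and yield both the robustness block $2d(\sqrt{\alpha d\log\log d} + \sqrt{\alpha\log(1/\delta)})\,T$ and the residual $\alpha T$. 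Carrying the common prefactor $(1 + \nu d\log\log d)$ through the signal and privacy groups and collecting the three resulting blocks reproduces \eqref{eq:M2-regret}.

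Finally I would discharge the conditioning and pass to expectation. Each round's estimate holds with probability at least $1 - 2\delta$, so a union bound over the $B = \Theta(\log T)$ rounds leaves a failure event of probability $O(\delta \log T)$; since $\norm{a} \le 1$ and $\norm{\theta^\star}\le 1$ force every gap to be at most $2$, the regret on this event is at most $2T$, so choosing $\delta = \Theta(1/T)$ (which only inflates the logarithmic factors already present) converts the high-probability bound into the stated bound on expected regret. I expect the main obstacle to be bookkeeping rather than conceptual: keeping the interaction between the play-count inflation $(1 + \nu d\log\log d)$ and the $1/(\nu m)$ and $1/\sqrt{\nu m}$ dependencies inside $\gamma_i$ consistent across all three groups, and verifying that each geometric sum lands on $\sqrt{T}$, $\log T$, or $T$ with no residual dependence on $B$ beyond the $\log T$ factor. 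In particular, the robustness group must be checked to scale linearly in $m$, so that it correctly produces the $\Theta(T)$ behavior (rather than $\Theta(\sqrt{T})$) that dominates in the corrupted regime.
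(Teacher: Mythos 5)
Your proposal is correct and follows essentially the same route as the paper's proof: condition on the good event, bound each round's regret by $4 n_i \gamma_{i-1}$ with $n_i \lesssim q^i(1+\nu d\log\log d)$, instantiate $\gamma_i$ via \cref{prop:robust-confidence-intervals} together with \cref{lemma:M2-error-contrib,lemma:norm-bound-g-optimal-M2,lemma:M2-y-norm} (with $n$ replaced by $k \lesssim d\log\log d$), and split the geometric sum into the $\sqrt{T}$, $\log T$, and $T$ blocks, with the linear-in-$m$ blocks absorbing the $(1+\nu d\log\log d)$ prefactor via $\sum_i q^i = T/(1+\nu d\log\log d)$. The only cosmetic difference is that you make the final high-probability-to-expectation conversion explicit (union bound plus $\delta = \Theta(1/T)$), which the paper delegates to the standard argument cited in the proof of \cref{theorem:M1-regret}.
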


\section{Conclusion}
In this paper we presented a robust and $\epspriv$-LDP policy for
batched stochastic linear bandits with an expected regret $\expec{R_T} = \tilde{O}\left(
[\sqrt{dT} + T\max\{\sqrt{\alpha d}, \alpha d\}](1 + \sfrac{1}{\epspriv})\right)$, where
$\alpha$ is the probability of corruption of each reward, which only requires a logarithmic
number of batch queries. In the absence of corruption ($\alpha = 0$), our regret matches
that of the best-known non-robust differentially private algorithm~\cite{HGFD22}. On the
other hand, when no differential privacy is required, our regret bounds shaves off a
factor of $\sqrt{d}$ compares to previous work on robust linear bandits~\cite{BLKS21}.
In addition, a variant of our policy is immediately applicable to a differential privacy
model that interpolates between the local and central settings and achieves improved
dependence on the privacy parameter $\epspriv$.

While simple to implement, our algorithms require the learner to provide an upper bound
on the corruption probability $\alpha$, which may be difficult to estimate in practice.
We leave the task of designing an adaptive policy as exciting future work. At the same
time, it is unclear if our regret bounds for the privacy model~\ref{item:M2} are tight
(in terms of the dependence on $\epspriv$ and $d$). A natural question left open by our
work is constructing tight lower bounds in this setting.

\bibliographystyle{unsrtnat}
\bibliography{main}

\begin{thebibliography}{34}
\providecommand{\natexlab}[1]{#1}
\providecommand{\url}[1]{\texttt{#1}}
\expandafter\ifx\csname urlstyle\endcsname\relax
  \providecommand{\doi}[1]{doi: #1}\else
  \providecommand{\doi}{doi: \begingroup \urlstyle{rm}\Url}\fi

\bibitem[Bouneffouf et~al.(2020)Bouneffouf, Rish, and Aggarwal]{BRA20}
Djallel Bouneffouf, Irina Rish, and Charu Aggarwal.
\newblock Survey on applications of multi-armed and contextual bandits.
\newblock In \emph{2020 IEEE Congress on Evolutionary Computation (CEC)}, pages
  1--8. IEEE, 2020.

\bibitem[Esfandiari et~al.(2022)Esfandiari, Mirrokni, and Narayanan]{NME22}
Hossein Esfandiari, Vahab Mirrokni, and Shyam Narayanan.
\newblock Tight and robust private mean estimation with few users.
\newblock In \emph{Proceedings of the 39th International Conference on Machine
  Learning}, volume 162 of \emph{Proceedings of Machine Learning Research},
  pages 16383--16412. PMLR, 17--23 Jul 2022.

\bibitem[Liu et~al.(2021)Liu, Kong, Kakade, and Oh]{liu2021robust}
Xiyang Liu, Weihao Kong, Sham Kakade, and Sewoong Oh.
\newblock Robust and differentially private mean estimation.
\newblock \emph{Advances in Neural Information Processing Systems},
  34:\penalty0 3887--3901, 2021.

\bibitem[Kothari et~al.(2022)Kothari, Manurangsi, and
  Velingker]{kothari2022private}
Pravesh Kothari, Pasin Manurangsi, and Ameya Velingker.
\newblock Private robust estimation by stabilizing convex relaxations.
\newblock In \emph{Conference on Learning Theory}, pages 723--777. PMLR, 2022.

\bibitem[Ghazi et~al.(2021)Ghazi, Kumar, Manurangsi, and
  Nguyen]{ghazi2021robust}
Badih Ghazi, Ravi Kumar, Pasin Manurangsi, and Thao Nguyen.
\newblock Robust and private learning of halfspaces.
\newblock In \emph{International Conference on Artificial Intelligence and
  Statistics}, pages 1603--1611. PMLR, 2021.

\bibitem[Dimitrakakis et~al.(2014)Dimitrakakis, Nelson, Mitrokotsa, and
  Rubinstein]{dimitrakakis2014robust}
Christos Dimitrakakis, Blaine Nelson, Aikaterini Mitrokotsa, and Benjamin~IP
  Rubinstein.
\newblock Robust and private bayesian inference.
\newblock In \emph{International Conference on Algorithmic Learning Theory},
  pages 291--305. Springer, 2014.

\bibitem[Li et~al.(2022)Li, Berrett, and Yu]{li2022robustness}
Mengchu Li, Thomas~B Berrett, and Yi~Yu.
\newblock On robustness and local differential privacy.
\newblock \emph{arXiv preprint arXiv:2201.00751}, 2022.

\bibitem[Bertsimas and Mersereau(2007)]{BM07}
Dimitris Bertsimas and Adam~J. Mersereau.
\newblock A learning approach for interactive marketing to a customer segment.
\newblock \emph{Operations Research}, 55\penalty0 (6):\penalty0 1120--1135,
  2007.
\newblock \doi{10.1287/opre.1070.0427}.

\bibitem[Esfandiari et~al.(2021)Esfandiari, Karbasi, Mehrabian, and
  Mirrokni]{EKMM21}
Hossein Esfandiari, Amin Karbasi, Abbas Mehrabian, and Vahab Mirrokni.
\newblock Regret bounds for batched bandits.
\newblock In \emph{Proceedings of the AAAI Conference on Artificial
  Intelligence}, volume~35, pages 7340--7348, 2021.

\bibitem[Lattimore and Szepesv{\'a}ri(2020)]{LS20}
Tor Lattimore and Csaba Szepesv{\'a}ri.
\newblock \emph{Bandit algorithms}.
\newblock Cambridge University Press, 2020.

\bibitem[{Basu} et~al.(2019){Basu}, {Dimitrakakis}, and {Tossou}]{BDT19}
Debabrota {Basu}, Christos {Dimitrakakis}, and Aristide {Tossou}.
\newblock {Differential Privacy for Multi-armed Bandits: What Is It and What Is
  Its Cost?}
\newblock \emph{arXiv e-prints}, art. arXiv:1905.12298, May 2019.

\bibitem[Kasiviswanathan et~al.(2011)Kasiviswanathan, Lee, Nissim,
  Raskhodnikova, and Smith]{KLN+11}
Shiva~Prasad Kasiviswanathan, Homin~K Lee, Kobbi Nissim, Sofya Raskhodnikova,
  and Adam Smith.
\newblock What can we learn privately?
\newblock \emph{SIAM Journal on Computing}, 40\penalty0 (3):\penalty0 793--826,
  2011.

\bibitem[Bogunovic et~al.(2021)Bogunovic, Losalka, Krause, and
  Scarlett]{BLKS21}
Ilija Bogunovic, Arpan Losalka, Andreas Krause, and Jonathan Scarlett.
\newblock {Stochastic Linear Bandits Robust to Adversarial Attacks}.
\newblock In Arindam Banerjee and Kenji Fukumizu, editors, \emph{Proceedings of
  The 24th International Conference on Artificial Intelligence and Statistics},
  volume 130 of \emph{Proceedings of Machine Learning Research}, pages
  991--999. PMLR, 13--15 Apr 2021.

\bibitem[Shariff and Sheffet(2018)]{SS18}
Roshan Shariff and Or~Sheffet.
\newblock Differentially private contextual linear bandits.
\newblock In S.~Bengio, H.~Wallach, H.~Larochelle, K.~Grauman, N.~Cesa-Bianchi,
  and R.~Garnett, editors, \emph{Advances in Neural Information Processing
  Systems}, volume~31. Curran Associates, Inc., 2018.

\bibitem[{Hanna} et~al.(2022){Hanna}, {Girgis}, {Fragouli}, and
  {Diggavi}]{HGFD22}
Osama~A. {Hanna}, Antonious~M. {Girgis}, Christina {Fragouli}, and Suhas
  {Diggavi}.
\newblock {Differentially Private Stochastic Linear Bandits: (Almost) for
  Free}.
\newblock \emph{arXiv e-prints}, art. arXiv:2207.03445, July 2022.

\bibitem[Zheng et~al.(2020)Zheng, Cai, Huang, Li, and Wang]{ZCH+20}
Kai Zheng, Tianle Cai, Weiran Huang, Zhenguo Li, and Liwei Wang.
\newblock Locally differentially private (contextual) bandits learning.
\newblock In H.~Larochelle, M.~Ranzato, R.~Hadsell, M.F. Balcan, and H.~Lin,
  editors, \emph{Advances in Neural Information Processing Systems}, volume~33,
  pages 12300--12310. Curran Associates, Inc., 2020.

\bibitem[Han et~al.(2021)Han, Liang, Wang, and Zhang]{HLWZ21}
Yuxuan Han, Zhipeng Liang, Yang Wang, and Jiheng Zhang.
\newblock Generalized linear bandits with local differential privacy.
\newblock In M.~Ranzato, A.~Beygelzimer, Y.~Dauphin, P.S. Liang, and J.~Wortman
  Vaughan, editors, \emph{Advances in Neural Information Processing Systems},
  volume~34, pages 26511--26522. Curran Associates, Inc., 2021.

\bibitem[Bittau et~al.(2017)Bittau, Erlingsson, Maniatis, Mironov, Raghunathan,
  Lie, Rudominer, Kode, Tinnes, and Seefeld]{BEM+17}
Andrea Bittau, \'{U}lfar Erlingsson, Petros Maniatis, Ilya Mironov, Ananth
  Raghunathan, David Lie, Mitch Rudominer, Ushasree Kode, Julien Tinnes, and
  Bernhard Seefeld.
\newblock {Prochlo: Strong Privacy for Analytics in the Crowd}.
\newblock In \emph{Proceedings of the 26th Symposium on Operating Systems
  Principles}, SOSP '17, page 441–459, New York, NY, USA, 2017. Association
  for Computing Machinery.
\newblock ISBN 9781450350853.
\newblock \doi{10.1145/3132747.3132769}.

\bibitem[{Cheu}(2021)]{Cheu21}
Albert {Cheu}.
\newblock {Differential Privacy in the Shuffle Model: A Survey of Separations}.
\newblock \emph{arXiv e-prints}, art. arXiv:2107.11839, 2021.

\bibitem[Tenenbaum et~al.(2021)Tenenbaum, Kaplan, Mansour, and Stemmer]{TKMS21}
Jay Tenenbaum, Haim Kaplan, Yishay Mansour, and Uri Stemmer.
\newblock {Differentially Private Multi-Armed Bandits in the Shuffle Model}.
\newblock In M.~Ranzato, A.~Beygelzimer, Y.~Dauphin, P.S. Liang, and J.~Wortman
  Vaughan, editors, \emph{Advances in Neural Information Processing Systems},
  volume~34, pages 24956--24967. Curran Associates, Inc., 2021.

\bibitem[Chowdhury and Zhou(2022)]{CZ22a}
Sayak~Ray Chowdhury and Xingyu Zhou.
\newblock Shuffle private linear contextual bandits.
\newblock \emph{arXiv e-prints}, art. arXiv:2202.05567, 2022.

\bibitem[Garcelon et~al.(2022)Garcelon, Chaudhuri, Perchet, and
  Pirotta]{GCPP22}
Evrard Garcelon, Kamalika Chaudhuri, Vianney Perchet, and Matteo Pirotta.
\newblock Privacy amplification via shuffling for linear contextual bandits.
\newblock In Sanjoy Dasgupta and Nika Haghtalab, editors, \emph{Proceedings of
  The 33rd International Conference on Algorithmic Learning Theory}, volume 167
  of \emph{Proceedings of Machine Learning Research}, pages 381--407. PMLR, 29
  Mar--01 Apr 2022.

\bibitem[Lykouris et~al.(2018)Lykouris, Mirrokni, and Paes~Leme]{LMP18}
Thodoris Lykouris, Vahab Mirrokni, and Renato Paes~Leme.
\newblock Stochastic bandits robust to adversarial corruptions.
\newblock In \emph{Proceedings of the 50th Annual ACM SIGACT Symposium on
  Theory of Computing}, STOC 2018, page 114–122, New York, NY, USA, 2018.
  Association for Computing Machinery.
\newblock ISBN 9781450355599.
\newblock \doi{10.1145/3188745.3188918}.

\bibitem[Gupta et~al.(2019)Gupta, Koren, and Talwar]{GKT19}
Anupam Gupta, Tomer Koren, and Kunal Talwar.
\newblock Better algorithms for stochastic bandits with adversarial
  corruptions.
\newblock In Alina Beygelzimer and Daniel Hsu, editors, \emph{Proceedings of
  the Thirty-Second Conference on Learning Theory}, volume~99 of
  \emph{Proceedings of Machine Learning Research}, pages 1562--1578. PMLR,
  25--28 Jun 2019.

\bibitem[Jun et~al.(2018)Jun, Li, Ma, and Zhu]{JLMZ18}
Kwang-Sung Jun, Lihong Li, Yuzhe Ma, and Jerry Zhu.
\newblock Adversarial attacks on stochastic bandits.
\newblock In S.~Bengio, H.~Wallach, H.~Larochelle, K.~Grauman, N.~Cesa-Bianchi,
  and R.~Garnett, editors, \emph{Advances in Neural Information Processing
  Systems}, volume~31. Curran Associates, Inc., 2018.

\bibitem[Liu and Shroff(2019)]{LS19}
Fang Liu and Ness Shroff.
\newblock Data poisoning attacks on stochastic bandits.
\newblock In Kamalika Chaudhuri and Ruslan Salakhutdinov, editors,
  \emph{Proceedings of the 36th International Conference on Machine Learning},
  volume~97 of \emph{Proceedings of Machine Learning Research}, pages
  4042--4050. PMLR, 09--15 Jun 2019.

\bibitem[Garcelon et~al.(2020)Garcelon, Roziere, Meunier, Tarbouriech, Teytaud,
  Lazaric, and Pirotta]{GRM+20}
Evrard Garcelon, Baptiste Roziere, Laurent Meunier, Jean Tarbouriech, Olivier
  Teytaud, Alessandro Lazaric, and Matteo Pirotta.
\newblock Adversarial attacks on linear contextual bandits.
\newblock In H.~Larochelle, M.~Ranzato, R.~Hadsell, M.F. Balcan, and H.~Lin,
  editors, \emph{Advances in Neural Information Processing Systems}, volume~33,
  pages 14362--14373. Curran Associates, Inc., 2020.

\bibitem[{Li} et~al.(2019){Li}, {Lou}, and {Shan}]{LLS19}
Yingkai {Li}, Edmund~Y. {Lou}, and Liren {Shan}.
\newblock {Stochastic Linear Optimization with Adversarial Corruption}.
\newblock \emph{arXiv e-prints}, art. arXiv:1909.02109, 2019.

\bibitem[Chen et~al.(2022)Chen, Koehler, Moitra, and Yau]{CMY22}
Sitan Chen, Frederic Koehler, Ankur Moitra, and Morris Yau.
\newblock Online and distribution-free robustness: Regression and contextual
  bandits with huber contamination.
\newblock In \emph{2021 IEEE 62nd Annual Symposium on Foundations of Computer
  Science (FOCS)}, pages 684--695. IEEE, 2022.

\bibitem[{Diakonikolas} and {Kane}(2019)]{DK19}
Ilias {Diakonikolas} and Daniel~M. {Kane}.
\newblock {Recent Advances in Algorithmic High-Dimensional Robust Statistics}.
\newblock \emph{arXiv e-prints}, art. arXiv:1911.05911, November 2019.

\bibitem[Prasad et~al.(2019)Prasad, Balakrishnan, and Ravikumar]{PBR19}
Adarsh Prasad, Sivaraman Balakrishnan, and Pradeep Ravikumar.
\newblock A unified approach to robust mean estimation.
\newblock \emph{arXiv preprint arXiv:1907.00927}, 2019.

\bibitem[Dwork and Roth(2014)]{DR14}
Cynthia Dwork and Aaron Roth.
\newblock The algorithmic foundations of differential privacy.
\newblock \emph{Foundations and Trends{\textregistered} in Theoretical Computer
  Science}, 9\penalty0 (3--4):\penalty0 211--407, 2014.

\bibitem[Vershynin(2018)]{Vershynin18}
Roman Vershynin.
\newblock \emph{High-Dimensional Probability: An introduction with applications
  in data science}, volume~47 of \emph{Cambridge Series in Statistical and
  Probabilistic Mathematics}.
\newblock Cambridge University Press, 2018.

\bibitem[{Gross}(2011)]{Gross11}
D.~{Gross}.
\newblock Recovering low-rank matrices from few coefficients in any basis.
\newblock \emph{IEEE Transactions on Information Theory}, 57\penalty0
  (3):\penalty0 1548--1566, 2011.

\end{thebibliography}

\appendix
\newpage

\section{Proofs from~\cref{sec:robust-linear-regression}}
\label{sec:appendix:robust-linear-regression-proofs}
We will work with the empirical second moment and covariance matrices defined below:
\begin{subequations}
\begin{align}
  \widetilde{\Sigma}_{G_0} &:=
  \frac{1}{\abs{G_0}} \sum_{i \in G_0} M_n^{-1/2} y_i^2 a_i a_i^{\T} M_n^{-1/2},
                           & \Sigma_{G_0} &:= \widetilde{\Sigma}_{G_0} - \theta_{G_0} \theta_{G_0}^{\T}, \\
  \widetilde{\Sigma}_n &:=
  \frac{1}{n} \sum_{i = 1}^n M_n^{-1/2} y_i^2 a_i a_i^{\T} M_n^{-1/2},
                       & \Sigma_{n} &:= \widetilde{\Sigma}_n - \theta_n \theta_n^{\T}.
\end{align}
\end{subequations}
In addition, we will use the vector notation below:
\begin{align}
  \bm{y} = \begin{pmatrix} y_1 \\ \vdots \\ y_n \end{pmatrix},
  \quad \text{and} \quad
  \bm{\sigma} = \begin{pmatrix}
    \sigma_1 \\ \vdots \\ \sigma_n
  \end{pmatrix}.
  \label{eq:vector-notation}
\end{align}
Our guarantees will depend on the maximal weighted norm of the elements $a_i$, which
we will denote by
\begin{equation}
  \mu := \max_{i = 1, \dots, n} \norm{a_i}_{M_n^{-1}}^2. 
  \label{eq:maximal-weighted-norm}
\end{equation}

Finally, we make the following assumption:
\begin{assumption}
  \label{asm:nontrivial-corruption}
  Fix $\delta$ to be a desired failure probability. The corruption probability $\alpha$ satisfies
  \(
    \alpha \gtrsim \frac{\log(1 / \delta)}{n}.
  \)
\end{assumption}
To approximate $\theta_{G_0}$, we will first reduce the above problem to robust
mean estimation and apply the spectral filtering algorithm from the robust
statistics literature. In~\cite{PBR19}, the authors provide the following guarantee:
\begin{theorem}[{\cite[Theorem 4]{PBR19}}]
  \label{theorem:robust-mean-estimation}
  Suppose that $\lambda \geq \opnorm{\Sigma_{G_0}}$ and that the set of inliers,
  $G_0$, satisfies
  \(
    \frac{n - \abs{G_0}}{n} + \frac{\log(1 / \delta)}{n} \leq c,
  \)
  where $c$ is a dimension-independent constant. Then with probability at least
  $1 - \delta$, the spectral filtering
  algorithm for robust mean estimation terminates in at most $O((n - \abs{G_0}) +
  \log(1 / \delta))$ steps and returns an estimate $\widetilde{\theta}$ satisfying
  \begin{equation}
    \bignorm{\widetilde{\theta} - \theta_{G_0}} \leq
    C \sqrt{\lambda} \left(
      \frac{n - \abs{G_0}}{n} + \frac{\log(1 / \delta)}{n} \right)^{1/2}.
  \end{equation}
\end{theorem}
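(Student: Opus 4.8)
The plan is to analyze the randomized spectral filtering routine (\cref{alg:spectral-filtering}) through an amortized ``win--win'' argument: at every iteration, either the current empirical covariance already certifies a small enough spectral norm (and we stop), or the filtering step removes, in expectation, at least as many corrupted points as inliers. Throughout, write $\epsilon := (n - \abs{G_0})/n$ for the corruption fraction and $\tau := \log(1/\delta)/n$, so the hypothesis reads $\epsilon + \tau \leq c$ and the target error is $C\sqrt{\lambda}\,(\epsilon + \tau)^{1/2}$. The first step is to record the deterministic regularity properties of the inlier set that hold with probability at least $1 - \delta$: for every subset $G \subseteq G_0$ with $\abs{G_0 \setminus G} \lesssim (\epsilon + \tau) n$, the empirical covariance of $G$ satisfies $\opnorm{\frac{1}{\abs{G}}\sum_{i \in G}(X_i - \theta_{G_0})(X_i - \theta_{G_0})^{\T}} \lesssim \lambda$ and the mean shift obeys $\norm{\theta_G - \theta_{G_0}} \lesssim \sqrt{\lambda}\,(\epsilon + \tau)^{1/2}$. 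Both follow from $\lambda \geq \opnorm{\Sigma_{G_0}}$ together with a uniform concentration bound over all such subsets, which is exactly where the additive $\log(1/\delta)/n$ term enters.

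Given these regularity facts, I would prove the central progress lemma: whenever the stopping test fails, i.e.\ the leading eigenvalue $\mu = \ip{v, \Sigma_S v} \geq 4\lambda$, the corrupted points dominate the outlier scores $\tau_i = \ip{v, X_i - \theta_S}^2$. The argument decomposes $\mu \cdot \abs{S} = \sum_{i \in S}\tau_i$ into contributions from the surviving good points $G = S \cap G_0$ and the surviving bad points $B = S \setminus G_0$. The good contribution $\sum_{i \in G}\tau_i$ is bounded by $O(\lambda\abs{S})$ using the restricted-covariance bound from the first step (after shifting the reference point from $\theta_S$ to $\theta_G$ and then to $\theta_{G_0}$ and absorbing the mean-shift terms). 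Since the total is at least $4\lambda\abs{S}$, the bad contribution $\sum_{i \in B}\tau_i$ must exceed $\sum_{i \in G}\tau_i$. Because \cref{alg:spectral-filtering} samples the removed point $Y$ with probability proportional to $\tau_i$, this yields $\prob{Y \in B} \geq \prob{Y \in G}$ at every filtering iteration.

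The main obstacle is converting this per-step expectation inequality into a high-probability guarantee that the routine never removes substantially more inliers than outliers, which is what both the termination count and the final accuracy require. I would track the signed random walk $d_t := (\text{bad removed through step } t) - (\text{good removed through step } t)$ and argue that $d_t$ is a submartingale with respect to the filtration generated by the sampling choices: since exactly one point is removed per step and $\prob{Y \in B} \geq \prob{Y \in G}$, the conditional increment is nonnegative. A maximal-inequality / optional-stopping argument for a stopped walk with $O(1)$-bounded increments then shows that, with probability at least $1 - \delta$, the number of good points ever removed stays within $O((n - \abs{G_0}) + \log(1/\delta))$; this simultaneously bounds the running time, since each iteration removes one point and there are at most $n - \abs{G_0}$ bad points to exhaust. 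The $\log(1/\delta)$ slack in the corruption-budget hypothesis is precisely the deviation budget consumed here, and this is the step I expect to require the most care.

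Finally, at termination we have $\opnorm{\Sigma_S} < 4\lambda$ and, by the random-walk control, $S = (G_0 \setminus E) \cup B'$ where the removed good set $E$ and the surviving bad set $B'$ both have size $O((\epsilon + \tau)n)$. I would conclude with the standard deterministic lemma that two point sets whose symmetric difference is an $O(\epsilon + \tau)$ fraction of the whole, one of which has empirical covariance bounded by $4\lambda$, have empirical means within $O(\sqrt{\lambda}\,(\epsilon+\tau)^{1/2})$. Concretely, writing $\theta_S - \theta_{G_0}$ as a weighted difference of averages over $E$ and over $B'$ and bounding each term by Cauchy--Schwarz against the bounded empirical second moment gives $\norm{\widetilde{\theta} - \theta_{G_0}} \lesssim \sqrt{\lambda}\,(\epsilon + \tau)^{1/2}$, which is the claimed bound after recalling that $\epsilon + \tau = \frac{n - \abs{G_0}}{n} + \frac{\log(1/\delta)}{n}$.
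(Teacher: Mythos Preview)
The paper does not contain a proof of this theorem: it is imported verbatim from~\cite[Theorem~4]{PBR19} and used as a black box (the only place it reappears is in~\cref{eq:robust-mean-est-error-bound}, where the conclusion is plugged in directly). There is therefore no ``paper's own proof'' to compare your proposal against.

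That said, your sketch is a faithful outline of the standard analysis of randomized spectral filtering as it appears in the robust-statistics literature (the ``win--win'' dichotomy, the bad-scores-dominate lemma, and the submartingale control on removed inliers). One point to be careful with: the regularity properties you invoke in your first step (``for every subset $G \subseteq G_0$ with $\abs{G_0 \setminus G} \lesssim (\epsilon+\tau)n$ the restricted covariance is $O(\lambda)$'') are \emph{not} probabilistic in this theorem's setting. The inlier set $G_0$ is a fixed, deterministic set of points and $\lambda \geq \opnorm{\Sigma_{G_0}}$ is an assumption; there is no distributional model behind $G_0$ here, so ``uniform concentration over subsets'' is not the right mechanism. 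The correct statement is the deterministic fact that removing an $O(\epsilon+\tau)$ fraction of points from a set with second moment bounded by $\lambda$ can only change the empirical covariance and mean by the claimed amounts, which follows from Cauchy--Schwarz against the second-moment bound. The $\log(1/\delta)/n$ term enters solely through the randomness of the \emph{sampling} in~\cref{alg:spectral-filtering}, via the maximal inequality for the submartingale $d_t$ in your third step, not through any regularity event on $G_0$.
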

In light of~\cref{theorem:robust-mean-estimation}, we will control the
quantities involved. Before we proceed, we state the following bound for the size of
$G_0$ that we will repeatedly appeal to throughout:
\begin{lemma}
  \label{lemma:G0-size}
  Let $n \gtrsim \frac{\log(1 / \delta)}{\alpha}$. Then with probability at
  least $1 - \delta$, we have
  \begin{equation}
    \abs{\frac{\abs{G_0}}{n} - (1 - \alpha)} \leq \sqrt{\frac{\alpha \log(1 / \delta)}{n}}
  \end{equation}
\end{lemma}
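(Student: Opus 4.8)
The plan is to recognize $\abs{G_0}$ as a sum of independent Bernoulli random variables and apply a sharp concentration inequality in the variance-dominated regime. Recall from~\eqref{eq:robust-linreg-observed-rewards} that the clean indices satisfy $G_0 = \set{i \mid Z_i = 1}$ with $Z_i \sim \mathrm{Ber}(1 - \alpha)$ independent. Setting $W_i := 1 - Z_i \sim \mathrm{Ber}(\alpha)$ to be the corruption indicators, we have $n - \abs{G_0} = \sum_{i=1}^n W_i$ and hence
\[
  \frac{\abs{G_0}}{n} - (1 - \alpha) = -\frac{1}{n}\left(\sum_{i=1}^n W_i - n\alpha\right),
\]
so the claim is equivalent to the two-sided deviation bound $\bigl|\sum_i W_i - n\alpha\bigr| \leq \sqrt{n\alpha\log(1/\delta)}$, up to the universal constant implicit in the statement.

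Each $W_i \in [0,1]$ has $\expec{W_i} = \alpha$ and $\mathrm{Var}(W_i) = \alpha(1-\alpha) \leq \alpha$, so the sum has total variance $v := \sum_i \mathrm{Var}(W_i) \leq n\alpha$ with uniformly bounded increments. Bernstein's inequality then gives, for any $t > 0$,
\[
  \prob{\left|\sum_{i=1}^n W_i - n\alpha\right| > t} \leq 2\exp\left(-\frac{t^2}{2(n\alpha + t/3)}\right).
\]
I would instantiate this with $t \asymp \sqrt{n\alpha\log(1/\delta)}$, i.e.\ the target deviation scaled by $n$.

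The one place requiring care---and the only real obstacle---is verifying that we land in the sub-Gaussian (variance-dominated) tail of Bernstein rather than the sub-exponential tail, so that the deviation scales like $\sqrt{\alpha}$ rather than like $1/n$. This is exactly where the hypothesis $n \gtrsim \log(1/\delta)/\alpha$, equivalently $n\alpha \gtrsim \log(1/\delta)$, enters: it guarantees $t \asymp \sqrt{n\alpha\log(1/\delta)} \lesssim n\alpha$, so the linear term $t/3$ in the Bernstein denominator is dominated by the variance term $n\alpha$. Consequently the exponent is of order $t^2/(n\alpha) = \Theta(\log(1/\delta))$, and choosing the constant in $t$ large enough drives the right-hand side below $\delta$. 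Dividing through by $n$ recovers $\bigl|\abs{G_0}/n - (1-\alpha)\bigr| \leq \sqrt{\alpha\log(1/\delta)/n}$.

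Equivalently, one may apply the multiplicative Chernoff bound directly to $\sum_i W_i$ with relative accuracy $\varepsilon = \sqrt{\log(1/\delta)/(n\alpha)}$, whose range of validity $\varepsilon \lesssim 1$ is again precisely the stated hypothesis; this route avoids invoking Bernstein explicitly. In either case all factors of $2$ and the remaining numerical constants are absorbed into the universal constants implicit in $\gtrsim$ and in the conclusion.
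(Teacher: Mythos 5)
Your proof is correct and takes essentially the same approach as the paper's: the paper likewise writes the corruption count $S_n = \sum_{i=1}^n \bm{1}\set{i \notin G_0}$ as a sum of i.i.d.\ $\mathrm{Ber}(\alpha)$ variables, applies a multiplicative Chernoff bound (the route you mention as an alternative to Bernstein) to get $\prob{\abs{S_n - n\alpha} \geq \sqrt{n\alpha \log(1/\delta)}} \leq \delta$, and divides by $n$. If anything, you are more explicit than the paper about the one subtle point, namely that the hypothesis $n \gtrsim \log(1/\delta)/\alpha$ is exactly what keeps the deviation in the variance-dominated (sub-Gaussian) regime of the tail bound.
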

\begin{proof}
  Let $S_{n} = \sum_{i = 1}^n \bm{1}\set{i \notin G_0}$, which is a sum of i.i.d.
  Bernoulli random variables with parameter $\alpha$. From a Chernoff bound~\citep[Exercise 2.3.5]{Vershynin18},
  it follows that
  \begin{equation}
    \prob{\abs{S_n - n \alpha} \geq \sqrt{n \alpha \log(1 / \delta)}} \leq \delta,
    \quad \text{for $\delta \in (0, 1]$}.
    \label{eq:chernoff}
  \end{equation}
  The claim follows after dividing both sides of the inequality by $n$:
  \begin{equation*}
    \frac{\abs{G_0}}{n} = 1  - \frac{S_n}{n} \in (1 - \alpha) \pm \sqrt{\frac{\alpha \log(1 / \delta)}{n}}.
  \end{equation*}
\end{proof}

\subsection{Controlling the empirical mean} \label{subsec:robust-linreg-empmean}
We now control the deviation of $\theta_{G_0}$ from the mean of the dataset
absent any corruptions.
\begin{lemma}
  \label{lemma:robust-linreg-empmean-bound}
  With probability at least $1 - \delta$, we have
  \begin{equation}
    \norm{\theta_{G_0} - \frac{n(1 - \alpha)}{\abs{G_0}} \theta_n} \lesssim
    \frac{\norm{\bm{y}}}{\abs{G_0}} \sqrt{\mu \alpha(1 - \alpha) \log(1/\delta)}.
  \end{equation}
\end{lemma}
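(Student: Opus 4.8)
The plan is to treat the clean rewards $\bm{y}$ as fixed and to regard the corruption indicators $Z_i = \bm{1}\set{i \in G_0} \sim \mathrm{Ber}(1-\alpha)$ as the only source of randomness; this is legitimate, since the target bound carries $\norm{\bm{y}}$ on its right-hand side and the subgaussian noise entering $\bm{y}$ is controlled separately (as in~\cref{lemma:M1-y-norm}). Writing $w_i := M_n^{-1/2} a_i y_i$ and recalling the definitions of $\theta_{G_0}$ and $\theta_n$ as the empirical means of the reweighted data $\set{w_i}$ over $G_0$ and over all indices, the first step is to observe that the prefactor $\tfrac{n(1-\alpha)}{\abs{G_0}}$ is precisely the one that makes the difference collapse into a centered sum:
\[
  \theta_{G_0} - \frac{n(1-\alpha)}{\abs{G_0}} \theta_n
  = \frac{1}{\abs{G_0}} \sum_{i=1}^n \left(Z_i - (1-\alpha)\right) w_i .
\]
Each summand $\left(Z_i - (1-\alpha)\right) w_i$ is independent and mean-zero, so the problem reduces to a high-probability bound on the norm of a sum of independent centered vectors, after which I would simply divide by $\abs{G_0}$.

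Next I would assemble the two ingredients that a variance-sensitive (Bernstein-type) concentration inequality requires. For the magnitude of each term I would use $\norm{w_i} = \abs{y_i}\,\norm{a_i}_{M_n^{-1}} \leq \sqrt{\mu}\,\abs{y_i}$, so that $\norm{(Z_i - (1-\alpha)) w_i} \leq \sqrt{\mu}\,\abs{y_i}$ via $\abs{Z_i - (1-\alpha)} \leq 1$. For the variance proxy I would use $\expec{(Z_i - (1-\alpha))^2} = \alpha(1-\alpha)$ and sum:
\[
  \sum_{i=1}^n \expec{\norm{(Z_i-(1-\alpha)) w_i}^2}
  = \alpha(1-\alpha) \sum_{i=1}^n \norm{w_i}^2
  \leq \alpha(1-\alpha)\, \mu \norm{\bm{y}}^2 ,
\]
where the final step invokes $\mu = \max_i \norm{a_i}_{M_n^{-1}}^2$ together with $\sum_i y_i^2 = \norm{\bm{y}}^2$. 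Crucially, it is the variance proxy, rather than any worst-case range, that carries the $\alpha(1-\alpha)$ factor; this is exactly what produces the $\sqrt{\alpha(1-\alpha)}$ scaling of the target bound, whereas a Hoeffding- or McDiarmid-type argument would be blind to the small Bernoulli variance and would lose this factor.

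Finally, applying a standard vector Bernstein inequality would give, with probability at least $1-\delta$,
\[
  \bignorm{\sum_{i=1}^n (Z_i - (1-\alpha)) w_i}
  \lesssim \sqrt{\alpha(1-\alpha)\, \mu \log(1/\delta)}\, \norm{\bm{y}}
    + \sqrt{\mu}\, \max_i \abs{y_i}\, \log(1/\delta) ,
\]
and dividing by $\abs{G_0}$ recovers the claimed leading term plus a lower-order remainder. The main obstacle is showing that this range term $\sqrt{\mu}\,\max_i\abs{y_i}\,\log(1/\delta)$ is dominated by the Bernstein term: the crude estimate $\max_i\abs{y_i} \leq \norm{\bm{y}}$ is too lossy, so I would instead argue that $\max_i\abs{y_i}^2 \ll \norm{\bm{y}}^2$ with high probability (the signal is bounded and the noise subgaussian, so $\max_i \abs{y_i}^2 = \tilde{O}(1)$ while $\norm{\bm{y}}^2 = \Omega(n)$), at which point \cref{asm:nontrivial-corruption}, namely $\alpha \gtrsim \log(1/\delta)/n$, is precisely what forces the range term below the variance term. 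Intersecting with the event of \cref{lemma:G0-size} then lets me treat $\abs{G_0}$ as $\Theta\!\left(n(1-\alpha)\right)$ wherever convenient, completing the argument.
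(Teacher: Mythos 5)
Your proposal is correct in substance and follows essentially the same route as the paper's proof: the paper defines $Q_i := (Z_i - \expec{Z_i})\, M_n^{-1/2} y_i a_i$, computes the identical variance proxy $\expec{\norm{Q_i}^2} \leq \alpha(1-\alpha)\, y_i^2\, \mu$, applies a vector Bernstein inequality, and identifies $\sum_{i=1}^n Q_i = \abs{G_0}\,\theta_{G_0} - n(1-\alpha)\,\theta_n$ --- exactly your centering, your variance computation, and your concentration step. The one place you genuinely differ is the range term. The paper invokes Gross's vector Bernstein inequality in a purely sub-Gaussian, dimension-free form, $\prob{\norm{\sum_i Q_i} \geq \sigma + t} \leq \expfun{-t^2/\sigma^2}$ with $\sigma^2 = \mu\,\alpha(1-\alpha)\norm{\bm{y}}^2$, so its proof terminates immediately and never confronts the term $\sqrt{\mu}\,\max_i \abs{y_i}\,\log(1/\delta)$ that your Bernstein form produces. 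Your effort to dominate that term is not superfluous: inequalities of Gross's type carry a side condition restricting $t$ to the small-deviation regime $t \lesssim V / \max_i \norm{Q_i}$, which is precisely your domination condition, so you are making explicit a constraint that the paper's citation leaves implicit. One caveat on your closing step: the claim $\norm{\bm{y}}^2 = \Omega(n)$ requires a variance lower bound on the rewards that $\mathrm{SubG}(1)$ noise does not by itself supply, and the lemma is stated conditionally on a fixed $\bm{y}$ with randomness only in the $Z_i$; for a highly non-uniform $\bm{y}$ (say, a single dominant entry) the range term genuinely cannot be absorbed and the stated bound can fail on an event of probability $\approx \alpha > \delta$. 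So the uniformity of $\bm{y}$ must either be imposed as a hypothesis or inherited from the application (via \cref{lemma:M1-y-norm} and \cref{asm:nontrivial-corruption}, as you suggest); this caveat applies equally to the paper's own proof, which your version merely makes visible.
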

\begin{proof}
  Define the following collection of random variables:
  \begin{equation}
    Q_i := (Z_i - \expec{Z_i}) M_n^{-1/2} y_i a_i, \quad \text{with} \; \;
    Z_i = \bm{1}\set{i \in G_0}.
  \end{equation}
  Clearly, we have $\expec{Q_i} = 0$. At the same time,
  \[
    \expec{\norm{Q_i}^2} = \mathrm{Var}(Z_i) y_i^2 \norm{a_i}_{M_n^{-1}}^2
    \leq \alpha(1 - \alpha) y_i^2 \mu.
  \]
  Applying the vector Bernstein inequality~\citep[Theorem 12]{Gross11}, we obtain
  \begin{align*}
    \prob{\Bignorm{\sum_{i = 1}^n Q_i} \geq \norm{\bm{y}} \sqrt{\mu \alpha(1 - \alpha)} + t}
    \leq \expfun{-\frac{t^2}{\mu \alpha(1 - \alpha) \norm{\bm{y}}^2}}.
  \end{align*}
  Consequently, we may set
  \(
    t = \norm{\bm{y}} \sqrt{\mu \alpha(1 - \alpha) \log(1 / \delta)}
  \) to obtain the claimed probability. Finally, we note that
  \[
    \sum_{i = 1}^n Q_i = \sum_{i \in G_0} M_n^{-1/2} y_i a_i - 
    (1 - \alpha) \sum_{i = 1}^n M_n^{-1/2} y_i a_i =
    \abs{G_0} \theta_{G_0} -  n (1 - \alpha) \theta_{n}.
  \]
\end{proof}

\subsection{Putting everything together}
We now combine the bounds from~\cref{subsec:robust-linreg-empmean} and~\cref{theorem:robust-mean-estimation}. We first note that
\begin{align*}
  \Sigma_{G_0} = \widetilde{\Sigma}_{G_0} - \theta_{G_0} \theta_{G_0}^{\T}
  &\preceq \widetilde{\Sigma}_{G_0} \\
               &= \frac{1}{\abs{G_0}} \sum_{i \in G_0} y_i^2 M_n^{-1/2} a_i a_i^{\T} M_n^{-1/2} \\
               &\preceq
               \frac{1}{\abs{G_0}} \sum_{i \in G_0} y_i^2 \bigopnorm{M_n^{-1/2} a_i a_i^{\T} M_n^{-1/2}} I_{d} \\
               &\preceq
               \frac{1}{\abs{G_0}} \sum_{i \in G_0} y_i^2 \bignorm{M_n^{-1/2} a_i}^2 I_{d} \\
               &\preceq
               \frac{1}{\abs{G_0}} \sum_{i \in G_0} y_i^2 \norm{a_i}_{M_n^{-1}}^2 I_{d},
\end{align*}
which implies that the spectral norm of $\Sigma_{G_0}$ is bounded from above by
\begin{equation}
  \opnorm{\Sigma_{G_0}} \leq
  \frac{\norm{\bm{y}_{G_0}}^2}{\abs{G_0}} \cdot \max_{i} \norm{a_i}_{M_n^{-1}}^2 \leq
  \frac{\mu \norm{\bm{y}}^2}{\abs{G_0}}.
\end{equation}
At the same time, we appeal to~\cref{lemma:G0-size} to deduce that
\[
  \abs{G_0} \geq (1 - \alpha)n - 3 \sqrt{n \log(1 / \delta)}
  \geq \frac{(1 - \alpha) n}{2},
  \quad \text{for $n \geq \frac{18\log(1 / \delta)}{(1 - \alpha)^2}$}.
\]
Consequently, we can replace the previous upper bound with
\(
  \opnorm{\Sigma_{G_0}} \leq \frac{2\mu \norm{\bm{y}}^2}{n (1 - \alpha)}.
\)

We now appeal to~\cref{theorem:robust-mean-estimation}. Note that
Lemma~\ref{lemma:G0-size} yields
\[
  \frac{n - \abs{G_0}}{n} = 1 - \frac{\abs{G_0}}{n} \leq 1 - (1 - \alpha) + \sqrt{\frac{\alpha \log(1 / \delta)}{n}}
  = \alpha + \sqrt{\frac{\alpha \log(1 / \delta)}{n}}.
\]
Therefore, the estimate $\widetilde{\theta}$ computed by the spectral filtering
algorithm satisfies
\begin{equation}
  \bignorm{\widetilde{\theta} - \theta_{G_0}} \lesssim
  \norm{\bm{y}} \sqrt{\frac{2 \mu}{n(1 - \alpha)}} \left(
    \alpha + \sqrt{\frac{\alpha \log(1 / \delta)}{n}} + \frac{\log(1 / \delta)}{n}
  \right)^{1/2}.
  \label{eq:robust-mean-est-error-bound}
\end{equation}
Taking a union bound over Lemmas~\ref{lemma:G0-size} and~\ref{lemma:robust-linreg-empmean-bound},
we deduce that~\eqref{eq:robust-mean-est-error-bound} holds with probability
at least $1 - 2\delta$.

\subsection{Application to phased elimination}
Let $\theta_{\mathsf{LS}} := M_n^{-1}\sum_{i = 1}^n y_i a_i$ denote the least
squares solution from an approximate G-optimal design, and define
\begin{subequations}
  \begin{align}
    \bar{\theta}_{G_0} &= M^{-1} \sum_{i \in G_0} y_i a_i = M^{-1/2} \abs{G_0} \theta_{G_0}, \\
    \bar{\theta} &= n M^{-1/2} \widetilde{\theta}.
  \end{align}
\end{subequations}
Note that $\bar{\theta}$ can be computed from the output of~\cref{alg:spectral-filtering},
while $\bar{\theta}_{G_0}$ only serves for the analysis.
With these at hand, we have the following decomposition:
\begin{align}
  \ip{a, \bar{\theta} - \theta^{\star}} &=
  \ip{a, \bar{\theta} - \bar{\theta}_{G_0}} +
  \ip{a, \bar{\theta}_{G_0} - \theta_{\mathsf{LS}}} + \ip{a, \theta_{\mathsf{LS}} - \theta^{\star}}
  \label{eq:decomp-1}
\end{align}
In what follows, we bound each term in~\eqref{eq:decomp-1} separately.
\subsubsection{Bounding the first term in~\eqref{eq:decomp-1}}
The first term in~\eqref{eq:decomp-1} is equal to
\begin{align}
  \ip{M^{-1/2} a, M^{1/2}(\bar{\theta} - \bar{\theta}_{G_0})} &=
  \ip{M^{-1/2} a, n \widetilde{\theta} - \abs{G_0} \theta_{G_0}} \notag \\
                                                              &=
  \ip{M^{-1/2} a, n(\widetilde{\theta} - \theta_{G_0})} +
  ( n - \abs{G_0} ) \ip{M^{-1/2} a, \theta_{G_0}} \notag \\
                                                              &\leq
  \norm{a}_{M^{-1}} \bignorm{n(\widetilde{\theta} - \theta_{G_0})} +
  ( n - \abs{G_0} ) \ip{M^{-1/2} a, \theta_{G_0}}
  \label{eq:decomp-1-inter-1}
\end{align}
In particular, the second term in~\eqref{eq:decomp-1-inter-1} is given by
\begin{align}
  \ip{M^{-1/2} a, \theta_{G_0}} &=
  \frac{1}{\abs{G_0}} \ip{M^{-1/2} a, M^{-1/2} \sum_{i \in G_0} y_i a_i} \notag \\
                                &\leq
  \frac{1}{\abs{G_0}} \norm{a}_{M^{-1}}
  \Bignorm{\sum_{i \in G_0} y_i a_i}_{M^{-1}} \notag \\
                                &\leq
  \frac{1}{\abs{G_0}} \norm{a}_{M^{-1}}
  \Bignorm{\big(\sum_{i \in G_0} a_i a_i^{\T}\big)^{-1/2} \sum_{i \in G_0} y_i a_i},
  \label{eq:decomp-1-inter-2}
\end{align}
using the fact that $\sum_{i \in G_0} a_i a_i^{\T} \preceq \sum_{i=1}^n a_i a_i^{\T}$.
Let $A_{G_0}$ be a matrix whose rows are the vectors $\set{a_i \mid i \in G_0}$.
We have
\[
  \sum_{i \in G_0} a_i a_i^{\T} = A_{G_0}^{\T} A_{G_0}, \quad
  \text{and} \quad
  \sum_{i \in G_0} y_i a_i = A_{G_0}^{\T} \bm{y}_{G_0}.
\]
Letting $A_{G_0} = U \Sigma V^{\T}$ denote the economic SVD of $A_{G_0}$, we thus have
\begin{equation}
  \Bignorm{\big(\sum_{i \in G_0} a_i a_i^{\T}\big)^{-1/2} \sum_{i \in G_0} y_i a_i} =
  \bignorm{(A_{G_0}^{\T} A_{G_0})^{-1/2} A_{G_0}^{\T} \bm{y}_{G_0}} =
  \bignorm{V \Sigma^{-1} V^{\T} V \Sigma U^{\T} \bm{y}_{G_0}} \leq
  \norm{\bm{y}}.
  \label{eq:decomp-1-norm-bound}
\end{equation}
Plugging~\cref{eq:decomp-1-norm-bound} into~\cref{eq:decomp-1-inter-2} and the result
into~\cref{eq:decomp-1-inter-1}, we obtain
\begin{equation*}
  \ip{M^{-1/2} a, M^{1/2}(\bar{\theta} - \bar{\theta}_{G_0})} \leq
  \norm{a}_{M^{-1}} \left(
    \bignorm{n(\widetilde{\theta} - \theta_{G_0})} +
    \frac{n - \abs{G_0}}{\abs{G_0}} \norm{\bm{y}}
  \right)
\end{equation*}
Using~\cref{eq:robust-mean-est-error-bound}, the bound $\norm{a}_{M^{-1}} \leq
\sqrt{\mu}$, and~\cref{lemma:G0-size} with $\alpha \gtrsim
\frac{\log(1 / \delta)}{n}$, the above becomes:
\begin{equation}
  \ip{M^{-1/2} a, M^{1/2}(\bar{\theta} - \bar{\theta}_{G_0})} \lesssim
  \mu\norm{\bm{y}} \sqrt{n}
    \left( \alpha + \frac{\log(1 / \delta)}{n} \right)^{1/2}
  \label{eq:conf-interval-term-1}
\end{equation}

\subsubsection{Bounding the second term in~\eqref{eq:decomp-1}}
Recall that $\bar{\theta}_{G_0} = M^{-1/2} \abs{G_0} \theta_{G_0}$.
We further decompose the second term in~\eqref{eq:decomp-1} into
\begin{align}
  \ip{a, \bar{\theta}_{G_0} - \theta_{\mathsf{LS}}} &=
  \ip{M^{-1/2} a, M^{1/2}(\bar{\theta}_{G_0} - \theta_{\mathsf{LS}})} \notag \\
                                                    &=
  \ip{M^{-1/2} a, M^{-1/2} \left( \sum_{i \in G_0} y_i a_i - \sum_{i = 1}^n y_i a_i \right)}
  \notag \\
                                                    &=
  \ip{M^{-1/2} a, \abs{G_0}\left( \theta_{G_0} - \frac{n}{\abs{G_0}} \theta_n\right)}
  \notag \\
                                                    &=
                                                    \ip{M^{-1/2} a, \abs{G_0}\left( \theta_{G_0} - \frac{n(1 - \alpha)}{\abs{G_0}} \theta_n\right)} + \ip{M^{-1/2} a, n \alpha \theta_n}
  \label{eq:decomp-2}
\end{align}
The first term in~\eqref{eq:decomp-2} can be upper bounded using~\cref{lemma:robust-linreg-empmean-bound}. Indeed,
\begin{align*}
  \ip{M^{-1/2} a, \abs{G_0} \left(\theta_{G_0} - \frac{n(1 - \alpha)}{\abs{G_0}} \theta_n\right)} &\leq
  \norm{a}_{M^{-1}} \norm{\bm{y}} \sqrt{\mu \alpha \log(1 / \delta)}
  \leq
  \mu \norm{\bm{y}} \sqrt{\alpha \log(1 / \delta)}.
\end{align*}
We now simplify the second term in~\eqref{eq:decomp-2}. With
$e_i = r_i - \ip{a_i, \theta^{\star}}$, we obtain
\begin{align*}
  \ip{M^{-1/2} a, n \alpha \theta_n} &=
  \alpha \ip{a, \sum_{i = 1}^n M^{-1} y_i a_i} \\
                                     &=
                             \alpha \ip{a, \sum_{i = 1}^n M^{-1} a_i (\ip{a_i, \theta^{\star}} + e_i)} \\
                                     &=
                                     \alpha \ip{a, M^{-1} \left(\sum_{i=1}^n a_i a_i^{\T}\right) \theta^{\star}} +
  \alpha \sum_{i = 1}^n \ip{a, M^{-1} a_i} e_i \\
                                     &=
                                     \alpha \ip{a, \theta^{\star}} + \alpha \sum_{i = 1}^n \ip{a, M^{-1} a_i} e_i.
\end{align*}
Since $\max_{a \in \cA} \abs{\ip{a, \theta^{\star}}} \leq 1$, combining the two bounds above yields
\begin{align}
  \ip{a, \bar{\theta}_{G_0} - \theta_{\mathsf{LS}}} &\lesssim
  \mu \norm{\bm{y}} \sqrt{\alpha \log(1 / \delta)} +
  \alpha \left( 1 + \sum_{i=1}^n \ip{a, M^{-1} a_i} e_i \right).
  \label{eq:conf-interval-term-2}
\end{align}
\subsubsection{Bounding the third term in~\eqref{eq:decomp-1}}
The last term is straightforward to bound. Let $e_i = y_i - \ip{a_i, \theta^{\star}}$
and note that
\begin{equation}
  \theta_{\mathsf{LS}} - \theta^{\star} = M^{-1} \sum_{i = 1}^n y_i a_i - \theta^{\star} =
  M^{-1} \sum_{i = 1}^n a_i (\ip{a_i, \theta^{\star}} + e_i) - \theta^{\star} =
  M^{-1} \sum_{i = 1}^n a_i e_i.
  \label{eq:conf-interval-term-3}
\end{equation}

\subsubsection{Putting everything together}
\label{sec:appendix:confidence-interval}
Combining~\cref{eq:conf-interval-term-1,eq:conf-interval-term-2,eq:conf-interval-term-3}
yields the following robust confidence intervals:
\begin{equation}
  \abs{\ip{a, \bar{\theta} - \theta^{\star}}} \lesssim
  \mu \norm{\bm{y}} \left[
    \sqrt{n} \left(\alpha + \frac{\log(1 / \delta)}{n}\right)^{1/2}
  + \sqrt{\alpha \log(1 / \delta)}
  \right] +
  \sum_{i = 1}^n e_i \ip{a, M^{-1} a_i} + \alpha.
  \label{eq:conf-interval-all-terms}
\end{equation}

\section{Missing proofs from~\cref{sec:dp-analysis}}

\subsection{Missing proofs from~\cref{sec:M1-DP}}

\subsubsection{Proof of~\cref{lemma:M1-error-contrib}}
\begin{proof}
  We write $e_i = \cM(r_i) - \ip{a_i, \theta^{\star}} = \eta_i + \xi_i, \; \;
  \eta_i \sim \mathrm{SubG}(1), \; \xi_i \sim \mathsf{Lap}\left(\frac{2}{\epspriv}\right)$.
  Now, define the random variables
  \begin{align*}
    X_i &:= \eta_i \ip{a, M^{-1} a_i}; \qquad
    Y_i := \xi_i \ip{a, M^{-1} a_i}.
  \end{align*}
  The family $\set{X_i}$ is subgaussian with \(\norm{X_i}_{\psi_2} \leq
  \abs{\ip{a, M^{-1} a_i}} \). Consequently,
  \begin{align*}
    \sum_{i = 1}^n \norm{X_i}_{\psi_2}^2 &\leq
    \sum_{i = 1}^n \ip{a, M^{-1} a_i}^2 \\
                                         &=
    \sum_{i = 1}^n \mathsf{Tr}(a^{\T} M^{-1} a_i a_i^{\T} M^{-1} a) \\
                                         &=
    \ip{M^{-1} a, \sum_{i = 1}^n a_i a_i^{\T} M^{-1} a} \\
                                         &=
    \ip{a, M^{-1} a} \\
                                         &=
    \norm{a}_{M^{-1}}^2.
  \end{align*}
  Therefore, applying the Hoeffding inequality~\citep[Theorem 2.6.2]{Vershynin18} yields:
  \begin{equation}
    \prob{\abs{\sum_{i=1}^n \eta_i \ip{a, M^{-1} a_i}} \geq c_1 \norm{a}_{M^{-1}} \sqrt{\log(1 / \delta)}} \leq
    \delta
    \label{eq:conf-interval-subgaussian-bound}
  \end{equation}
  On the other hand, when $\xi_i \sim \mathsf{Lap}(2 / \epspriv)$, we have
  the Bernstein-style bound
  \begin{align*}
    \expec{e^{\lambda \sum_{i = 1}^n \xi_i \ip{a, M^{-1} a_i}}} &=
    \prod_{i = 1}^n \expec{\exp\left(\lambda \xi_i \ip{a, M^{-1} a_i}\right)} \\
    &\leq
    \prod_{i = 1}^n \exp\left(\frac{\lambda^2 \ip{a, M^{-1} a_i}^2}{2\epspriv ^2}\right), \quad
    \forall \lambda \in \left(0, \frac{b}{\|\alpha\|_{\infty}}\right],
  \end{align*}
  using~\citep[Proposition 2.7.1(e)]{Vershynin18} in the last step. Collecting terms
  we obtain
  \[
    \prod_{i = 1}^n \expfun{
      \frac{\lambda^2 \ip{a, M^{-1} a_i}^2}{2 \epspriv^2}
    } =
    \expfun{
    \lambda^2 \frac{\sum_{i = 1}^n \ip{a, M^{-1} a_i}^2}{\epspriv^2}} \leq
    \expfun{\lambda^2 c_1 \left( \frac{\norm{a}_{M^{-1}}}{\epspriv} \right)^2}.
  \]
  Now, appealing to~\citep[Proposition 2.7.1(a)]{Vershynin18}, we obtain the concentration
  bound
  \begin{equation}
    \prob{\Big|\sum_{i = 1}^n \xi_i \ip{a, M^{-1} a_i}\Big| \geq
    c_2 \frac{\norm{a}_{M^{-1}} \log(1 / \delta)}{\epspriv}} \leq \delta.
    \label{eq:conf-interval-laplace-bound}
  \end{equation}
  Combining the two bounds yields the result.
\end{proof}

\subsubsection{Proof of~\cref{lemma:M1-y-norm}}
\begin{proof}
  Let $\pi$ denote an approximate G-optimal design in the sense
  of~\cref{def:approx-g-opt-design}. We have
  \begin{align*}
    M &= \sum_{i = 1}^n a_i a_i^{\T} = \sum_{a \in \supp(\pi)} n_{a} aa^{\T}
    = n \cdot \sum_{a \in \supp(\pi)} \pi(a) aa^{\T} = n M(\pi).
  \end{align*}
  Consequently, we have the inequality
  \begin{align*}
    \norm{a}_{M^{-1}}^2 &= \ip{a, M^{-1} a} = \ip{a, (n M(\pi))^{-1} a}
    = \frac{1}{n} \ip{a, M^{-1}(\pi) a} = \frac{\norm{a}^2_{M^{-1}(\pi)}}{n}
    \leq \frac{2d}{n},
  \end{align*}
  using the fact that $\pi$ is an approximate G-optimal design
  in the last inequality.
\end{proof}

\subsubsection{Proof of~\cref{lemma:M1-y-norm}}
\begin{proof}
  With $\bm{y} = \bmx{y_1 & \dots & y_n}^{\T}$, we have $\norm{\bm{y}} \leq \sqrt{n} \norm{\bm{y}}_{\infty}$.
  To control the latter, we note
  \begin{align*}
    \max_{i}\abs{\ip{a_i, \theta^{\star}} + \eta_i + \xi_i} \leq
    \max_{i} \set{ \abs{\ip{a_i,\theta^{\star}}} + \abs{\eta_i} + \abs{\xi_i} }
                                                            \leq
    1 + \max_{i} \abs{\eta_i} + \max_{i} \abs{\xi_i}.
  \end{align*}
  Since $\eta_i \sim \mathrm{SubG}(1)$, standard concentration inequalities for
  subgaussian maxima yield
  \begin{equation}
    \prob{\max_{i} \abs{\eta_i} \geq C \sqrt{\log(n / \delta)}}
    \leq \delta.
    \label{eq:eta-max-norm-bound}
  \end{equation}
  Similarly, $\xi_i$ are subexponential with parameter $2/\epspriv$.
  By a union bound and~\citep[Proposition 2.7.1]{Vershynin18},
  \begin{align*}
    \prob{\max_{i} \abs{\xi_i} \geq t} &\leq
    \sum_{i = 1}^n \prob{\abs{\xi_i} \geq t}
                                       \leq
    n \expfun{-\min\set{\frac{\epspriv^2 t^2}{8}, \frac{\epspriv t}{4}}}
  \end{align*}
  Setting $t := \frac{4 \log(n / \delta)}{\epspriv}$ above yields
  $\max_{i} \abs{\xi_i} \leq \frac{4 \log(n / \delta)}{\epspriv}$ with probability
  at least $1 - \delta$.

  Finally, taking a union bound and relabelling yields the result.
\end{proof}

\subsubsection{Proof of~\cref{theorem:M1-regret}}
\begin{proof}
We perform a regret analysis under the LDP model~\ref{item:M1}. Using~\cref{prop:robust-confidence-intervals,lemma:M1-error-contrib,lemma:M1-y-norm} and~\cref{asm:nontrivial-corruption},
writing $\mu := \max_{a \in \cA} \norm{a}_{M^{-1}}^2$, we have
\begin{align}
  \big|\langle a, \widetilde{\theta} - \theta^{\star} \rangle\big|& \lesssim
  \begin{aligned}[t]
    & \mu \left( n \sqrt{\alpha} + \sqrt{\alpha n \log(1 / \delta)} \right)
    \left(1 + \sqrt{\log(n / \delta)} + \frac{\log(n/\delta)}{\epspriv}\right) \\
    & + \sqrt{\mu \log(1 / \delta)} \left(1 + \frac{\sqrt{\log(1/\delta)}}{\epspriv}\right) + \alpha,
  \end{aligned}
  \label{eq:simple-conf-interval}
\end{align}
for any fixed $a$ with probability at least $1 - \delta$ by suitably adjusting
constants.
In particular, when $a_1, \dots, a_n$ are drawn from an approximate
G-optimal design,~\cref{lemma:M1-y-norm} implies that
\begin{equation}
  \mu \equiv \max_{a \in \cA} \norm{a}_{M_n^{-1}}^2 \leq \frac{2d}{n},
  \label{eq:norm-bound-g-optimal-M1}
\end{equation}
so the bound in~\eqref{eq:simple-conf-interval} can be written as
\begin{equation}
  \big|\langle a, \widetilde{\theta} - \theta^{\star} \rangle\big|
  \lesssim
  \begin{aligned}[t]
  & d \left( \sqrt{\alpha} + \sqrt{\frac{\alpha \log(1 / \delta)}{n}} \right)
  \left(1 + \sqrt{\log(n / \delta)} + \frac{\log(n/\delta)}{\epspriv}\right) \\
  & + \sqrt{\frac{d \log(1 / \delta)}{n}} \left(1 + \frac{\sqrt{\log(1/\delta)}}{\epspriv}\right),
  \end{aligned}
  \label{eq:simple-conf-interval-1}
\end{equation}
By standard arguments (see, e.g., the proof of~\citep[Theorem 5.1]{EKMM21}), we may focus
on bounding the regret conditioned on the ``good'' event where all the invocations
to the coreset construction and robust filtering algorithms succeed. This requires us to
choose failure probability $\delta$ proportional to $\delta' / (KT^2)$, where $T$ is the
number of rounds, $K$ is the size of the action space, and $\delta'$ is an overall desired
failure probability. To ease notation, we relabel $\delta$ in this manner below.

Now, recalling the width of the confidence interval
\[
  \gamma_i := \sqrt{d}\left(\sqrt{\log(q^i / \delta)} + \frac{\log(q^i / \delta)}{\epspriv}\right)
  (\sqrt{\alpha} + \alpha \sqrt{d}) +
  \alpha + \sqrt{\frac{d \log(1 / \delta)}{q^i}}\left(
    1 + \frac{\sqrt{\log(1 / \delta)}}{\epspriv}
  \right),
\]
we have the following expression for the regret:
\begin{align}
    \mathsf{Regret} &= \sum_{i = 1}^B (\text{arms pulled}) \times (\text{instantaneous regret}) \notag \\
    &\leq
    \sum_{i = 1}^B q^i 4 \gamma_{i - 1} \notag \\
    &\lesssim
    \begin{aligned}[t]
    & \sum_{i = 1}^B q^i
    \sqrt{\frac{d \log(1 / \delta)}{q^{i - 1}}}
    \left(1 + \frac{\sqrt{\log(1 / \delta)}}{\epspriv}\right) \\
    & +
    \sqrt{d}(\sqrt{\alpha} + \alpha\sqrt{d})
    \sum_{i = 1}^B q^i \left(
      \sqrt{\log(q^{i-1} / \delta)} + \frac{\log(q^{i-1} / \delta)}{\epspriv}
    \right)
    \end{aligned}
    \label{eq:regret-bound-all}
\end{align}
To bound the first sum above, we notice that
\[
  \sum_{i = 1}^B q^i \sqrt{\frac{1}{q^{i-1}}} =
  q \sum_{i = 0}^{B-1} \sqrt{q^i} = q \cdot \frac{q^{B / 2} - 1}{q^{1/2} - 1}.
\]
For the second sum, we first bound
\(
  \log(q^{i-1} / \delta) \leq \log(T^{\frac{B-1}{B}} / \delta) \leq \log(T / \delta),
\)
followed by
\[
  \sum_{i=1}^B q^i \left(
    \sqrt{\log(q^{i-1} / \delta)} + \frac{\log(q^{i-1}/\delta)}{\epspriv}
  \right) \leq
  \left( \sqrt{\log(T / \delta)} + \frac{\log(T / \delta)}{\epspriv} \right) T.
\]
Finally, we note that when $B \geq \log(T)$ we have
$\frac{q^{B/2} - 1}{q^{1/2} - 1} \lesssim \sqrt{T}$ and $q=T^{1/B} \leq e$. Therefore,
\begin{align*}
  \mathsf{Regret} \lesssim
  \sqrt{T d \log(1 / \delta)} \left(1 + \frac{\sqrt{\log (1 / \delta)}}{\epspriv} \right)
  + T \max\set{\sqrt{\alpha d}, \alpha d} \left(
    \sqrt{\log(T / \delta)} + \frac{\log(T / \delta)}{\epspriv}
  \right).
\end{align*}
\end{proof}

\subsection{Missing proofs from~\cref{sec:M2-DP}}
\subsubsection{Proof of~\cref{lemma:M2-error-contrib}}
\begin{proof}
  We have $e_{v} = \cM(r_{v}) - \ip{v, \theta^{\star}} = \eta_{v} + \xi_{v}$, where
  $\eta_v \sim \mathrm{SubG}(1 / n_{a})$ and $\xi_v \sim \mathsf{Lap}(2 / n_{a} \epspriv)$.
  Therefore,
  \[
    \eta_{v} + \xi_{v} \overset{(d)}{=} \frac{1}{\sqrt{n_{a}}} \tilde{\eta}_v + \frac{1}{n_{a}} \tilde{\xi}_v,
    \quad \tilde{\eta}_v \sim \mathrm{SubG}(1), \; \tilde{\xi}_v \sim \mathsf{Lap}(2 / \epspriv).
  \]
  Consequently, we may trace the proof of~\cref{lemma:M1-error-contrib} to arrive at
  \begin{equation}
    \sum_{v \in \cH} \eta_v \ip{a, M^{-1} v} \lesssim
    \norm{a}_{M^{-1}} \sqrt{\log(1 / \delta)} \left(
      \frac{c_1}{\sqrt{n_a}} + \frac{c_2 \sqrt{\log(1/\delta)}}{n_{a} \epspriv}
    \right).
  \end{equation}
  This completes the proof after noticing that $n_{a} \geq \nu m$.
\end{proof}

\subsubsection{Proof of~\cref{lemma:norm-bound-g-optimal-M2}}
\begin{proof}
  Recall that $M = \sum_{v \in \cH} vv^{\T}$. In particular, we have
  \begin{align}
    \sum_{v \in \cH} vv^{\T} &= \sum_{v \in \mathrm{supp}(\hat{\pi})}
    \succeq \sum_{v \in \cH} \hat{\pi}(v) vv^{\T} \implies
    \norm{a}_{M^{-1}}^2 \leq \norm{a}_{M^{-1}(\hat{\pi})}^2 \leq 2d,
  \end{align}
  where the last inequality follows since ${\pi}$ is an approximate G-optimal design.
\end{proof}

\subsubsection{Proof of~\cref{lemma:M2-y-norm}}
\begin{proof}
  Let $k = \abs{\mathrm{supp}(\hat{\pi})}$ and let $y_1, \dots, y_k$ be an
  enumeration of the elements $y_{v}, \; v \in \mathrm{supp}(\hat{\pi})$.
  With $\bm{y} = \bmx{y_1 & \dots & y_k}^{\T}$, we have $\norm{\bm{y}} \leq \sqrt{k} \norm{\bm{y}}_{\infty}$.
  To control the latter, we note
  \begin{align*}
    \max_{v} \abs{\ip{v, \theta^{\star}} + \eta_v + \xi_v} \leq
    \max_{v} \set{\abs{\ip{v, \theta^{\star}}} + \abs{\eta_v} + \abs{\xi_v} }
                                                            \leq
    1 + \max_{v} \abs{\eta_v} + \max_{i} \abs{\xi_v}.
  \end{align*}
  Since $\eta_v \sim \mathrm{SubG}(1 / n_{v})$, standard concentration inequalities for
  subgaussian maxima yield
  \begin{equation}
    \prob{\max_{v} \abs{\eta_v} \geq C \sqrt{\frac{\log(k / \delta)}{n_{v}}}}
    \leq \delta.
    \label{eq:eta-max-norm-bound-M2}
  \end{equation}
  Similarly, $\xi_v$ are subexponential with parameter $\frac{2}{n_{v} \epspriv}$.
  From a union bound and~\citep[Proposition 2.7.1]{Vershynin18}, it follows that
  \begin{align*}
    \prob{\max_{v} \abs{\xi_v} \geq t} &\leq
    \sum_{v \in \mathrm{supp}(\hat{\pi})} \prob{\abs{\xi_v} \geq t} \\
                                                     &\leq
    \sum_{v \in \mathrm{supp}(\hat{\pi})} \prob{\abs{\xi_v} \geq t} \\
                                                     &\leq
    k \expfun{-\min\set{\frac{\min_{v} n^2_{v} \epspriv^2 t^2}{8}, \frac{\min_{v} n_v \epspriv t}{4}}} \\
                                                     &\leq
    k \expfun{-\min\set{\frac{\nu^2 m^2 \epspriv^2 t^2}{8}, \frac{\nu m \epspriv t}{4}}}.
  \end{align*}
  Setting $t := \frac{4 \log(k / \delta)}{\nu m \epspriv}$ above yields
  $\max_{v} \abs{\xi_v} \leq \frac{4 \log(k / \delta)}{\nu m \epspriv}$ with probability
  at least $1 - \delta$.

  Finally, taking another union bound and relabelling yields the result.
\end{proof}

\subsubsection{Proof of~\cref{theorem:M2-regret}}
\begin{proof}
We first derive an expression for the robust confidence
interval from~\cref{prop:robust-confidence-intervals} under~\ref{item:M2}.
Indeed, with probability at least $1 - \delta$, for any fixed $a \in \cA$
we have:
\begin{equation}
  \big|\langle a, \widetilde{\theta} - \theta^{\star} \rangle\big|
  \lesssim \begin{aligned}[t]
    & \sqrt{\frac{d \log(1 / \delta)}{\nu m}} \left(1 + \frac{1}{\epspriv}\sqrt{\frac{\log(1/\delta)}{\nu m}}\right) \\
    & + 2d \left(1 + \sqrt{\frac{\log (k / \delta)}{\nu m}} +
                 \frac{\log(k / \delta)}{\nu m \epspriv} \right)
            \left(\sqrt{k \alpha} + \sqrt{\alpha \log(1 / \delta)}\right) \\
    & + \alpha,
  \end{aligned}
  \label{eq:confidence-interval-M2}
\end{equation}
where $k := \abs{\mathrm{supp}({\pi})}$.
Recall we can find an approximate G-optimal design ${\pi}$ satisfying
\begin{equation}
  k := \abs{\mathrm{supp}({\pi})} \lesssim d \log \log d.
  \label{eq:sparse-design}
\end{equation}
Therefore, we may proceed with the regret analysis. Similarly to the proof of~\cref{theorem:M1-regret},
we condition on the case where all randomized algorithms and invocations to random events succeed with
high probability.

Then, with $m = q^{i}$ at round $i$, we have the following bound:
\begin{equation}
  n_{i} = \sum_{v \in \mathrm{supp}({\pi})} n_{v} =
  q^i \sum_{v} \max\set{{\pi}(v), \nu} \leq
  q^i (1 + \nu d \log \log d).
\end{equation}
In particular, we have the following property for the sum $\sum_{i} q^i$:
\begin{equation}
  \sum_{i = 1}^B q^i = \frac{1}{1 + \nu d \log \log d} \sum_{i = 1}^B n_{i} = \frac{T}{1 + \nu d \log \log d}.
  \label{eq:action-sum-M2}
\end{equation}
Consequently, the regret of the algorithm conditioned on the good event
is given by
\begin{align}
  \mathsf{Regret} &\leq
  4 \sum_{i = 1}^B n_{i} \gamma_{i - 1} \leq
  4 q (1 + \nu d \log \log d) \sum_{i = 0}^{B-1} q^{i} \gamma_{i},
  \label{eq:regret-bound-M2-start}
\end{align}
where $\gamma_i$ is the width of the confidence interval at round $i$.
The first term in the sum $\sum_{i} q^i \gamma_i$ is
\begin{align}
  \sum_{i = 0}^{B-1} q^{i}
  \sqrt{\frac{d \log(1 / \delta)}{\nu q^i}} + q^i \frac{\sqrt{d} \log(1/\delta)}{\nu q^i \epspriv}
  &\leq 
  \sqrt{\frac{d \log(1 / \delta)}{\nu}} \left( \sum_{i = 0}^{B-1} q^{i / 2}
    + \frac{\sqrt{\log(1 / \delta)}}{\epspriv \sqrt{\nu}} \sum_{i = 0}^{B-1}
  \right) \notag \\
  &\leq \sqrt{\frac{d \log(1 / \delta)}{\nu}} \left(\frac{q^{B / 2} - 1}{q^{1/2} - 1}
    + \sqrt{\frac{\log(1 / \delta)}{\nu}} \frac{B - 1}{\epspriv}
  \right),
  \label{eq:regret-M2-1}
\end{align}
which is a term independent of the corruption fraction. The second group
of summands in $\sum_{i} q^i \gamma_i$ is
\begin{align}
  & \sum_{i = 0}^{B-1} q^i \left( 1 + \sqrt{\frac{\log(d \log \log d / \delta)}{\nu q^i}}
    + \frac{\log(d \log \log d / \delta)}{\nu q^i \epspriv} \right) \notag \\
  & =
  \frac{T}{1 + \nu d \log \log d} +
  \sqrt{\frac{\log(d \log \log d / \delta)}{\nu}} \cdot \frac{q^{B/2} - 1}{q^{1/2} - 1} +
  \frac{\log(d \log\log d/\delta)}{\nu} \cdot \frac{B - 1}{\epspriv}.
  \label{eq:regret-M2-2}
\end{align}
Finally, summing over $i$ using the last term of the confidence interval yields
\begin{align}
  \sum_{i = 0}^{B-1} q^i \alpha = \frac{\alpha T}{1 + \nu d \log \log d}.
  \label{eq:regret-M2-3}
\end{align}
Putting everything together, we arrive at the claimed regret bound:
\begin{align}
  \mathsf{Regret} &\lesssim
  \begin{aligned}[t]
  & (1 + \nu d \log \log d) \left(
    \sqrt{\frac{d T \log(1 / \delta)}{\nu}} +
    \frac{\log(1 / \delta) \log(T) \sqrt{d}}{\epspriv \sqrt{\nu}}
  \right) \\
  & + 2d \left(\sqrt{\alpha d \log \log d} + \sqrt{\alpha \log(1 / \delta)}\right)
  \left(T + \sqrt{\frac{T d \log \log d / \delta)}{\nu}} + \frac{\log(d \log \log d / \delta)}{\nu} \frac{\log T}{\epspriv}\right) \\
  & + \alpha T.
  \end{aligned}
\end{align}
\end{proof}

\end{document}